\pdfoutput=1 
\documentclass[letterpaper]{article} 
\usepackage[preprint]{aaai2027}  
\usepackage[hyphens]{url}  
\usepackage{graphicx} 
\usepackage{natbib}  
\usepackage{caption} 
\usepackage{algorithm}
\usepackage{algorithmic}

\usepackage{macro}
\usepackage{outlines}
\usepackage{todonotes}
\usepackage{amssymb}
\usepackage{amsmath}
\usepackage{amsthm}
\usepackage{tikz}
\usetikzlibrary{positioning, arrows.meta}
\usepackage{booktabs}

\theoremstyle{plain}
\newtheorem{theorem}{Theorem}

\newtheorem{lemma}[theorem]{Lemma}
\newtheorem{corollary}[theorem]{Corollary}

\theoremstyle{definition}
\newtheorem{definition}{Definition}

\theoremstyle{remark}

\usepackage{newfloat}
\usepackage{listings}
\DeclareCaptionStyle{ruled}{labelfont=normalfont,labelsep=colon,strut=off} 
\floatstyle{ruled}
\newfloat{listing}{tb}{lst}{}
\floatname{listing}{Listing}

\title{Learning Lookahead Lemmas for Neural Network Verification}
\author{
    Liam Davis,
    Haoze Wu
}
\affiliations{
    Amherst College
}

\usepackage{docmute}

\begin{document}

\maketitle

\begin{abstract}
State-of-the-art neural network verifiers use the branch-and-bound procedure
as their core solving mechanism. We introduce an inprocessing
framework for neural network verification driven by the lookahead procedure.
Under this framework, lookahead derives new lemmas over the phases of unstable
ReLUs, which are collected into an implication graph that is used to
prune the search space and vivify boolean cuts. We instantiate the framework
in two state-of-the-art verifiers, Marabou and \abcrown, and demonstrate that it
improves performance in both, proving up to 34\% more instances unsatisfiable.
\end{abstract}

\section{Introduction}
\label{sec:intro}

Deep neural networks (DNNs) have become state-of-the-art solutions in various
domains~\citep{sallab2017deep,mnih2013playing,he2015delving}.
Despite this, their internal structure is opaque or unintelligible, which makes it difficult 
to understand their behavior and trust their predictions. To ensure the safe deployment 
of DNNs in safety-critical domains, there has been substantial effort from the formal 
methods and machine learning communities on developing scalable formal verification 
techniques~\citep{katz2017reluplex,singh2019deeppoly,zhang2018efficient,wang2021beta}.
State-of-the-art complete neural network verifiers are based on the
branch-and-bound (BaB) procedure, which involves performing case splitting on 
non-linear activation functions, such as ReLUs, and analyzing the resulting 
subproblems with an incomplete verifier. If the analysis is inconclusive, 
further case splits are performed recursively.

Recent efforts to accelerate the BaB search procedure fall broadly into two families.
The first family focuses on improving the branching heuristic, or the strategy
used to select what ReLU to perform a case split on. Heuristics such as BaBSR~\citep{bunel2020branch}
and pseudo-impact~\citep{wu2022efficient} attempt to get the most out of the information already available
about the state of the search. Lookahead
branching~\citep{davis2026lookaheadbranchingneuralnetwork} extends this idea into a general
procedure that simulates several candidate splits and evaluates the subproblems each one
produces. The second family is deriving information from subproblems proven to be infeasible,
usually in the form of boolean cuts like BICCOS~\citep{zhou2024scalable} and PICID~\citep{isac2026picidproofdrivenclauselearning}.
Both of these families are reactive, in that they exploit information the search has already
paid to obtain. Lookahead branching~\citep{davis2026lookaheadbranchingneuralnetwork} comes closest to escaping this, as simulating
a candidate split can fix the phase of an unstable ReLU (active or inactive) globally,
but this is a consequence of the procedure rather than its primary purpose, which remains
to select the best ReLU to split on. Compare this to modern SAT solvers, which not only derive
information reactively via conflict clauses, but also proactively through
inprocessing~\citep{jarvisalo2012inprocessing}, where
the search is periodically paused to conduct exploration intended to derive new
information. Neural network verification has no analogue, and this is the gap we seek to fill.

In this work, we explore the idea of inprocessing in neural network verification in full,
treating the derivation of new information as a part of the solve procedure
rather than as a consequence of the branching heuristic.
Specifically, we construct a variation of the lookahead procedure to
derive lemmas about the phases of unstable ReLUs. Each lemma states either that a phase
is infeasible outright or that fixing one phase forces another, and we collect them into an
\emph{implication graph}. The graph is not a fixed set of facts but a structure the solver reasons
over, and we develop three techniques that do so. \emph{SAT closure} treats the graph as a
propositional formula and closes it under its logical consequences, so that a subproblem can be
pruned whenever the phase fixes are jointly unsatisfiable. \emph{Reprobing} reruns the lookahead procedure as the search
establishes new phases, refreshing the graph with implications that hold only once bounds have
tightened. \emph{Cut vivification} uses the graph to extract an unsatisfiable core of a boolean
cut, shortening the cut to the phase fixes responsible for the infeasibility so that it
prunes more subproblems. Together, these compound into an inprocessing framework for neural
network verification. We implement the framework in two state-of-the-art verifiers,
Marabou~\citep{katz2019marabou,wu2024marabou2} and
\abcrown~\citep{zhang2018efficient,xu2020automatic,xu2021fast,wang2021beta,zhang2022general,zhang22babattack,kotha2023provably,shi2025genbab,zhou2025clip},
and show that the framework yields improved performance in both verifiers.

The rest of the paper is organized as follows: Section~\ref{sec:related} reviews related work. 
Section~\ref{sec:background} provides background information on neural network verification. 
Section~\ref{sec:methodology} presents the construction of the implication graph with the lookahead
procedure, as well as how the graph is used to prune the search space and strengthen boolean cuts.
Section~\ref{sec:evaluation} provides an evaluation of the inprocessing framework, comparing
it to the current version of both Marabou and \abcrown. Finally, we conclude and discuss 
future directions in Section~\ref{sec:conclusion}.

\section{Related Work}
\label{sec:related}

Early efforts for complete neural network verification used SMT and MILP solvers to 
enumerate activation patterns~\citep{katz2017reluplex}. The first unified instantiation 
of BaB was presented by~\citet{bunel2020branch}. State-of-the-art complete neural 
network verifiers include SMT-based solvers~\citep{katz2019marabou, wu2024marabou2} and 
GPU-accelerated bound propagation-based verifiers~\citep{wang2021beta,shi2025genbab}.

Inprocessing techniques have been widely used in both SAT and SMT solving, including
variable and clause elimination~\citep{een2005effective}, probing-based simplifications such as
failed literal detection, hyper-binary resolution and equality
reduction~\citep{bacchus2003effective}, and simplifications derived from the binary implication
graph~\citep{heule2011efficient}. ~\citet{biere2021preprocessing} provides a survey of such techniques. While these
techniques were first applied as preprocessing, \citet{jarvisalo2012inprocessing} formalized the
rules under which they can be interleaved with the search itself while preserving satisfiability.
Clause vivification was likewise introduced as a preprocessing
step~\citep{piette2008vivifying} and later applied to learnt clauses during the
search~\citep{luo2017effective,li2020clause}. Lookahead has also been used to derive information
rather than only to branch, most notably in cube-and-conquer~\citep{heule2011cube}.

Our contributions draw conceptually from this line of work. The implication graph is inspired by
the binary implication graph, reprobing by periodic re-simplification, and cut vivification by
clause vivification. We depart from it in how the information is obtained, taking inspiration
from lookahead branching~\citep{davis2026lookaheadbranchingneuralnetwork} and using a variant of
the lookahead procedure as the probe, which lets the framework reason about ReLU phases with the
bound propagation machinery a verifier already has.

\section{Background}
\label{sec:background}

\subsection{Neural Network Verification}
\label{subsec:nnv}
For a trained deep neural network $N: \mathbb{R}^n \rightarrow \mathbb{R}^m$ with an 
input $x \in \mathbb{R}^n$ and output $y \in \mathbb{R}^m$, the general DNN verification problem 
is whether or not there exists an input $x$ that produces an output 
$y$ that satisfies a property $\phi(y)$. If there exists an input $x$ that leads to an 
output $y$ that satisfies the property $\phi(y)$, then the problem is satisfiable 
(SAT); otherwise it is unsatisfiable (UNSAT). As the property $\phi(y)$ 
is a safety property, SAT denotes that an adversarial example violates the safety property 
and is referred to as \emph{unsafe}, while UNSAT denotes a satisfied property and is referred to as \emph{safe}.
In this paper, we will use the terms SAT and UNSAT.

\subsection{Branch-and-Bound}
\label{subsec:bab}
The branch-and-bound (BaB) framework, illustrated by Figure~\ref{fig:bab}, is an efficient approach to neural network
verification. BaB systematically tightens the bounds of a neural network by splitting on unstable ReLU neurons. When an unstable
ReLU is split, the problem is divided into two subproblems, one where the ReLU is active and one where the ReLU is
inactive. After bound propagations, this turns a large problem into two more manageable problems. To verify with BaB,
ReLU splits are repeatedly applied until each resulting subproblem can be classified as either SAT or UNSAT.

\begin{figure}[t]
    \centering
    \resizebox{\linewidth}{!}{\begin{tikzpicture}[
    subproblem/.style={circle, draw, thick, minimum size=6.5mm, inner sep=0pt, font=\small},
    unverified/.style={subproblem, fill=blue!18},
    verified/.style={subproblem, fill=green!22},
    edgelabel/.style={font=\scriptsize, sloped},
    ->/.style={-{Stealth[length=1.8mm]}}
]

\node[unverified] (nRoot) at (0,0)      {$r_1$};

\node[unverified] (nL) at (-2.0,-1.3)   {$r_2$};
\node[unverified] (nR) at ( 2.0,-1.3)   {$r_3$};

\node[verified]   (nA) at (-3.0,-2.6)   {\checkmark};
\node[unverified] (nB) at (-1.2,-2.6)   {$r_4$};
\node[unverified] (nC) at ( 1.0,-2.6)   {$r_5$};
\node[unverified] (nD) at ( 3.2,-2.6)   {$r_6$};

\node[unverified] (nE) at (-1.9,-3.9)   {};
\node[verified]   (nF) at (-0.6,-3.9)   {\checkmark};
\node[unverified] (nG) at ( 0.3,-3.9)   {};
\node[verified]   (nH) at ( 1.7,-3.9)   {\checkmark};
\node[verified]   (nI) at ( 2.8,-3.9)   {\checkmark};
\node[unverified] (nJ) at ( 4.0,-3.9)   {};

\draw[->, dashed] (nRoot) -- (nL) node[edgelabel, pos=0.4, above] {$\neg r_1$};
\draw[->]         (nRoot) -- (nR) node[edgelabel, pos=0.4, above] {$r_1$};

\draw[->, dashed] (nL) -- (nA) node[edgelabel, pos=0.4, above] {$\neg r_2$};
\draw[->]         (nL) -- (nB) node[edgelabel, pos=0.4, above] {$r_2$};

\draw[->, dashed] (nR) -- (nC) node[edgelabel, pos=0.4, above] {$\neg r_3$};
\draw[->]         (nR) -- (nD) node[edgelabel, pos=0.4, above] {$r_3$};

\draw[->, dashed] (nB) -- (nE) node[edgelabel, pos=0.4, above] {$\neg r_4$};
\draw[->]         (nB) -- (nF) node[edgelabel, pos=0.4, above] {$r_4$};

\draw[->, dashed] (nC) -- (nG) node[edgelabel, pos=0.4, above] {$\neg r_5$};
\draw[->]         (nC) -- (nH) node[edgelabel, pos=0.4, above] {$r_5$};

\draw[->, dashed] (nD) -- (nI) node[edgelabel, pos=0.4, above] {$\neg r_6$};
\draw[->]         (nD) -- (nJ) node[edgelabel, pos=0.4, above] {$r_6$};

\foreach \n in {nE, nG, nJ} {
    \draw[->, dashed] (\n.south) -- ++(-0.22,-0.32);
    \draw[->]         (\n.south) -- ++( 0.22,-0.32);
}

\end{tikzpicture}}
    \caption{Each node represents a subproblem from BaB by splitting unstable ReLUs. 
    Green nodes are verified, blue nodes require further splitting.}
    \label{fig:bab}
\end{figure}

\subsection{Boolean Cuts in Branch-and-Bound}
\label{subsec:cuts}
In the BaB framework, a \emph{cut} is a global infeasibility that can be used to determine that a subproblem is UNSAT, 
without requiring further ReLU splits. Boolean cuts are collections of ReLU phase fixes that are globally infeasible.

In neural network verification, there are two predominant boolean cuts: BICCOS~\citep{zhou2024scalable} and
PICID~\citep{isac2026picidproofdrivenclauselearning}. \abcrown uses BICCOS cuts, which are infeasibilities mined
from the BaB search procedure. Formally, let $R_+$ and $R_-$ denote the sets of
neurons restricted to the active ($r_i = 1$) and inactive ($r_i = 0$) regimes, respectively,
along a branch proven UNSAT. BICCOS introduces the cut
\begin{equation}
    \label{eq:biccos}
    \sum_{i \in R_+} r_i - \sum_{i \in R_-} r_i \le |R_+| - 1,
\end{equation}
where $r_i \in [0,1]$ is the relaxed ReLU indicator for neuron $i$. This inequality is violated exactly by the
assignment $r_i = 1\ \forall i \in R_+$ and $r_i = 0\ \forall i \in R_-$, so it excludes that phase
combination everywhere in the search tree, even in subdomains where those neurons have not yet been split.

PICID is used in Marabou, and derives a similar boolean cut through a different mechanism. Whenever a subproblem is proven UNSAT, PICID uses the
Farkas lemma to derive an infeasibility certificate that isolates the branching decisions responsible
for the contradiction. Formally, let $R$ denote a set of ReLU phase fixes that
jointly derive infeasibility. PICID defines the cut as the conjunction over $R$:
\begin{equation}
    \bigwedge_{i \in R} r_i.
\end{equation}

Both cuts are applied the same way during the BaB search procedure. When a subproblem is proven
UNSAT, the cut is added to the global problem and if violated elsewhere, that subproblem is immediately deemed UNSAT without further ReLU splits.
This way, cuts can be used to prune the search tree and accelerate the verification process.

\subsection{Lookahead Branching}
\label{subsec:lookahead-branching}
Lookahead Branching~\citep{davis2026lookaheadbranchingneuralnetwork} is a branching heuristic for BaB that selects which unstable ReLU to split at a given subproblem. The heuristic
simulates several splits prior to committing to one, making more informed decisions than
heuristics that rely solely on local information. Notably, it can also fix the phases of
certain unstable ReLUs.

In this work, we extend this direction and instead of using lookahead as a branching heuristic, we use it 
solely to derive new information, integrating it with the boolean cuts in Section~\ref{subsec:cuts} into 
our framework. 

\section{Methodology}
\label{sec:methodology}

Recall that a phase fix pins an unstable ReLU to its \emph{active} or \emph{inactive}
state. Denote $r_i$ for a phase fix of ReLU $i$ in the active phase and $\neg r_i$ for the complementary
phase fix in the inactive phase.

\begin{definition}[Binary Implication]
\label{def:binary-implication}
Let $r_1$ and $r_2$ be phase fixes of unstable ReLUs. We say $r_1$ \emph{implies}
$r_2$, written $r_1 \to r_2$, if the phase fixes $\{r_1, \neg r_2\}$ are globally infeasible.
\end{definition}

Binary implications are the natural unit of information here, as pinning one neuron shows what that
one fix forces, whereas obtaining larger implications would require probing sets of fixes.

\begin{definition}[Unit Lemma]
\label{def:unit-lemma}
Let $r$ be a phase fix of an unstable ReLU. We say $r$ is a \emph{unit lemma} if the phase fix
$\{\neg r\}$ is globally infeasible on its own, i.e., $r$ is entailed unconditionally.
\end{definition}

\begin{definition}[Implication Graph]
\label{def:implication-graph}
The \emph{implication graph} is the directed graph $G = (V, E)$, where
$V = \{r_i, \neg r_i : i \text{ is an unstable ReLU}\}$ is the set of phase fixes and
$E = \{(r, r') \in V \times V : r \to r'\}$ is the set of binary implications.
\end{definition}

A unit lemma is represented in the implication graph as an edge from a phase fix to its
complement. Applying Definition~\ref{def:binary-implication} to the two phase fixes of the same
ReLU reduces exactly to $\{\neg r\}$ being globally infeasible, so a unit lemma $r$ contributes
the edge $\neg r \to r$ to $E$.

\begin{figure}[t]
    \centering
    \resizebox{\linewidth}{!}{\begin{tikzpicture}[
    literal/.style={circle, draw, thick, minimum size=7.5mm, inner sep=0pt, font=\small, fill=blue!18},
    unit/.style={literal, fill=orange!22},
    edgelabel/.style={font=\scriptsize, sloped},
    ->/.style={-{Stealth[length=1.8mm]}}
]

\node[literal] (r1)  at (0,0)     {$r_1$};
\node[literal] (nr1) at (0,-1.8)  {$\neg r_1$};

\node[literal] (r2)  at (2.6,0)     {$r_2$};
\node[literal] (nr2) at (2.6,-1.8)  {$\neg r_2$};

\node[literal] (r3)  at (5.2,0)     {$r_3$};
\node[literal] (nr3) at (5.2,-1.8)  {$\neg r_3$};

\node[unit]    (r4)  at (7.8,0)     {$r_4$};
\node[literal] (nr4) at (7.8,-1.8)  {$\neg r_4$};

\draw[->]         (r1)  -- (r2)  node[edgelabel, pos=0.5, above] {};
\draw[->, dashed]  (nr2) -- (nr1) node[edgelabel, pos=0.5, below] {};

\draw[->]         (r1)  -- (nr3) node[edgelabel, pos=0.6, above] {};
\draw[->, dashed]  (r3)  -- (nr1) node[edgelabel, pos=0.4, below] {};

\draw[->]         (nr2) -- (r4)  node[edgelabel, pos=0.6, below] {};
\draw[->, dashed]  (nr4) -- (r2)  node[edgelabel, pos=0.4, above] {};

\draw[->, very thick] (nr4) to[bend right=35] (r4);

\end{tikzpicture}}
    \caption{Implication graph over four unstable
    ReLUs. Solid edges are binary implications; dashed edges are their contrapositives.
    $r_4$ is a unit lemma.}
    \label{fig:implication-graph}
\end{figure}

Figure~\ref{fig:implication-graph} illustrates an example implication graph over four unstable ReLUs.
An implication graph can be used to derive new information during the BaB search procedure, but
the implication graph must be constructed first. The next sections describe
how to construct the implication graph, then how to use it to derive new information.

\subsection{Constructing the Implication Graph}
\label{subsec:constructing-implication-graph}

Algorithm~\ref{alg:construct-implication-graph} shows the lookahead
procedure used to construct the implication graph. Each unstable ReLU is fixed to both
the active and inactive phases. After each phase fix, the bounds are tightened, but
not with the full verification machinery. Unlike lookahead branching, which commits 
to each simulated split using the full bound propagation procedure, constructing
the implication graph only requires a restricted, budgeted tightening pass per phase fix.
A restricted pass keeps each probe cheap enough to afford a probe over every unstable neuron.
Marabou tightens with DeepPoly~\citep{singh2019deeppoly}, plus a pivot-capped simplex feasibility check rather than an
exact LP. \abcrown tightens with a single $\alpha$-CROWN~\citep{xu2021fast} pass reusing build-time alphas
rather than the full optimization loop. This restriction is what makes
lookahead as inprocessing cheap in comparison to the branching heuristic.

After tightening, the bounds are checked for infeasibility. If the phase fix is infeasible, 
then we can record the complementary phase fix as a unit lemma. Otherwise, the previously
unstable neurons are checked to see if any have been forced to a single phase. 
Each newly-fixed neuron is assigned a new binary implication, where the 
source is the phase fix that forced it and the target is the newly-fixed neuron.
Theorem~\ref{thm:soundness-construction} formalizes the soundness of the implication graph.

\begin{algorithm}[t]
\caption{Lookahead Procedure for Constructing the Implication Graph}
\label{alg:construct-implication-graph}
\begin{algorithmic}[1]
\REQUIRE unstable ReLUs $1, \dots, n$
\ENSURE implication graph $G = (V, E)$
\STATE $V \gets \{r_i, \neg r_i : i = 1, \dots, n\}$; $E \gets \emptyset$
\FOR{each unstable ReLU $i$ and phase fix $r \in \{r_i, \neg r_i\}$}
    \STATE tighten bounds under $r$ with a restricted oracle
    \IF{$r$ is infeasible}
        \STATE record $\neg r$ as a unit lemma
    \ELSE
        \FOR{each unstable ReLU $j \neq i$}
            \IF{the tightened bounds force $j$ to a single phase $r_j'$}
                \STATE $E \gets E \cup \{(r, r_j')\}$
            \ENDIF
        \ENDFOR
    \ENDIF
\ENDFOR
\RETURN $G = (V, E)$
\end{algorithmic}
\end{algorithm}

\begin{theorem}[Soundness of Construction]
\label{thm:soundness-construction}
If the tightening oracle (CROWN / DeepPoly) is sound, then every unit lemma and every edge of the
resulting implication graph $G$ satisfies Definition~\ref{def:unit-lemma} and
Definition~\ref{def:binary-implication}, respectively.
\end{theorem}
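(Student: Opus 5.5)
The plan is to make ``sound oracle'' precise and then check separately the two places in Algorithm~\ref{alg:construct-implication-graph} where information is recorded. By soundness of the tightening oracle we mean the following. Let $\mathcal{D}$ be the global input domain, and let $S(r) \subseteq \mathcal{D}$ be the set of inputs whose network evaluation puts the ReLU of $r$ in the phase $r$ prescribes. Then every bound the oracle returns under the phase fix $r$ is valid for every $x \in S(r)$: for each neuron $j$ the true pre-activation value $z_j(x)$ lies in the returned interval $[\ell_j, u_j]$. A sound oracle declares $r$ infeasible only when its bounds are contradictory (some $\ell_j > u_j$, or the output bound rules out every point) or when the pivot-capped simplex reports infeasibility. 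Each of these is a certificate that $S(r) = \emptyset$. The capped simplex never \emph{falsely} reports infeasibility; at worst it gives up. ``Globally infeasible'' in Definitions~\ref{def:binary-implication} and~\ref{def:unit-lemma} is read as the corresponding set of inputs being empty over $\mathcal{D}$.

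\emph{Unit lemmas.} Suppose the procedure records $\neg r$ as a unit lemma after probing $r$. It does so only because the oracle declared $r$ infeasible. By soundness $S(r) = \emptyset$, so the single phase fix $\{r\}$ is globally infeasible. This is exactly Definition~\ref{def:unit-lemma} applied to $\neg r$ (whose complement is $r$). The edge $r \to \neg r$ is therefore also consistent with Definition~\ref{def:binary-implication}, because $\{r, \neg\neg r\} = \{r\}$.

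\emph{Binary implications.} Suppose instead that $r$ is not declared infeasible and the edge $(r, r_j')$ is added because the tightened bounds force $j$ to phase $r_j'$. Say $r_j'$ is the active phase, so $\ell_j \ge 0$. (The inactive case, $u_j \le 0$, is symmetric; the strict-versus-nonstrict boundary at $0$ is harmless because $\mathrm{ReLU}$ agrees on both phases there.) Take any $x \in S(r)$. By soundness $z_j(x) \ge \ell_j \ge 0$, so $x$ lies in the active phase of $j$ and hence not in $S(\neg r_j')$. Therefore $S(r) \cap S(\neg r_j') = \emptyset$, meaning $\{r, \neg r_j'\}$ is globally infeasible, which is Definition~\ref{def:binary-implication}. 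Every edge in $E$ arises in exactly one of these two ways, since $E$ starts empty. This gives the theorem.

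The main obstacle is pinning down soundness for the \emph{restricted} passes. The argument needs the budgeted $\alpha$-CROWN pass with reused build-time alphas, and the DeepPoly pass, to remain sound relaxations even though they are not optimized. My first move would be to note that any fixed choice of $\alpha$ within its feasible range gives a valid linear relaxation, and that early termination only loosens bounds and never invalidates them. A second subtlety is that the probe is run at the root domain: if it were run inside a subproblem, the resulting lemmas would hold only relative to that subproblem's fixes. So I would state explicitly that the probe runs over the global domain $\mathcal{D}$. Reprobing can then be handled later by adding the current fixes as antecedents.
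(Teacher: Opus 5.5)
Your argument follows the paper's proof. The oracle's bounds are valid on every point satisfying the probed fix, so empty bounds give the unit lemma and a sign-definite bound on $j$ gives the edge. The paper merely packages both cases as one observation: a fix inconsistent with sound bounds computed under $R$ makes $R \cup \{r''\}$ infeasible.

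One step fails as written, though. You define $S(r)$ from the input domain and the phase alone, and you read global infeasibility as $S(r)=\emptyset$. Yet you count the case where \emph{the output bound rules out every point} (and a simplex check over the full query) as a certificate that $S(r)=\emptyset$. Such a refutation uses the query's output constraint, so it only shows that no point of $S(r)$ satisfies that constraint. Under your reading, the recorded unit lemma can therefore be false. The repair is to relativize to the query, as the paper does when it concludes that $Q \wedge r \wedge \neg r_j'$ has no solution. Let $S(r)$ be the set of models of $Q \wedge r$, or else exclude output-based refutations from recording unit lemmas. Bounds valid on all inputs in phase $r$ remain valid on this subset, so your binary-implication argument goes through verbatim. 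This query-relative notion is also exactly what SAT-closure pruning and vivification consume.

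Separately, your parenthetical on the boundary at $0$ is too quick, although the paper's proof ignores the point altogether. With closed phases, a point with $z_j(x)=0$ lies in both phases. So your step \emph{$x$ lies in the active phase, hence not in $S(\neg r_j')$} fails when $\ell_j=0$. Moreover, $\Sigma$ has a single Boolean variable per ReLU. The edges $r\to r_j$ (from $\ell_j=0$ under $r$) and $r'\to\neg r_j$ (from $u_j=0$ under $r'$) therefore jointly exclude such a point, even when it is a genuine model of $Q\wedge r\wedge r'$. Agreement of the ReLU output at $0$ does not rescue this. Fix a convention in which the two phases partition the line, e.g.\ active iff $z_j\ge 0$, and force the inactive phase only when $u_j<0$.
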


Proof is deferred to Appendix~\ref{sec:appendix-proofs}. Through the lookahead procedure,
we can also derive tighter bounds on all ReLUs. When we look ahead on both phases of a neuron,
each phase fix outputs its own per-neuron bounds. Since every point satisfies one of the two
phases, the hull of the two bounds is sound and can be tighter than the bounds at the root. The
following theorem formalizes this bound.

\begin{theorem}[Hull Bound]
\label{thm:hull-bound}
For each unstable ReLU $i$ and neuron $k$, let $[\mathrm{lb}_k^{r_i}, \mathrm{ub}_k^{r_i}]$ and
$[\mathrm{lb}_k^{\neg r_i}, \mathrm{ub}_k^{\neg r_i}]$ be the bounds computed under $r_i$ and
$\neg r_i$. The hull $[\min(\mathrm{lb}_k^{r_i}, \mathrm{lb}_k^{\neg r_i}), \max(\mathrm{ub}_k^{r_i},
\mathrm{ub}_k^{\neg r_i})]$ is sound for $k$ without assuming either phase of ReLU $i$.
\end{theorem}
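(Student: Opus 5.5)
The plan is a case split on the phase of ReLU $i$, using only the soundness of the tightening oracle, as in Theorem~\ref{thm:soundness-construction}. Let $\mathcal{X}$ be the set of inputs admissible at the current (sub)problem, i.e., satisfying the input constraints and any phase fixes already committed. For $x \in \mathcal{X}$, write $z_k(x)$ for the value of neuron $k$ and $z_i(x)$ for the pre-activation of ReLU $i$. Since ReLU $i$ is unstable, every $x \in \mathcal{X}$ lies in at least one of two regions: $\mathcal{X}^{r_i} = \{x \in \mathcal{X} : z_i(x) \ge 0\}$ or $\mathcal{X}^{\neg r_i} = \{x \in \mathcal{X} : z_i(x) \le 0\}$. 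Only the fact that $\mathcal{X} = \mathcal{X}^{r_i} \cup \mathcal{X}^{\neg r_i}$ is needed; the two regions may overlap at $z_i = 0$, and this causes no harm.

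Next I would pin down what soundness of the oracle means. Run under the phase fix $r_i$, a sound oracle (CROWN / DeepPoly, $\alpha$-CROWN with any fixed alphas, or the simplex check) returns bounds with $\mathrm{lb}_k^{r_i} \le z_k(x) \le \mathrm{ub}_k^{r_i}$ for all $x \in \mathcal{X}^{r_i}$. The same holds for $\neg r_i$ on $\mathcal{X}^{\neg r_i}$. The restricted, budgeted passes of Section~\ref{subsec:constructing-implication-graph} only weaken tightness, not soundness, so this property holds for them as well.

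The main step is then immediate. Fix $x \in \mathcal{X}$. If $x \in \mathcal{X}^{r_i}$, then
\[
\min(\mathrm{lb}_k^{r_i}, \mathrm{lb}_k^{\neg r_i}) \le \mathrm{lb}_k^{r_i} \le z_k(x) \le \mathrm{ub}_k^{r_i} \le \max(\mathrm{ub}_k^{r_i}, \mathrm{ub}_k^{\neg r_i}).
\]
The case $x \in \mathcal{X}^{\neg r_i}$ is symmetric. Hence the hull contains $z_k(x)$ for every $x \in \mathcal{X}$, with no assumption on the phase of ReLU $i$.

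The one genuine obstacle is the degenerate case where one branch is infeasible, as when Algorithm~\ref{alg:construct-implication-graph} records a unit lemma. There the oracle may return an empty interval ($\mathrm{lb} > \mathrm{ub}$) or no bounds at all. I would handle this with a convention: the bounds of an infeasible branch are taken to be $[+\infty, -\infty]$, which is vacuously sound because that branch's region is empty. Then $\min$ and $\max$ reduce to the other branch's bounds. That is correct, since by Theorem~\ref{thm:soundness-construction} the infeasible region contains no points, so $\mathcal{X}$ equals the feasible branch's region. I would also note explicitly that intersecting the hull with the root bounds remains sound, because each of the two intervals is individually sound on $\mathcal{X}$. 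This justifies the claim that the hull can tighten the root bounds.
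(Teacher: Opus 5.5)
Your proposal is correct and takes the same approach as the paper's proof. Both argue that every admissible point satisfies $r_i$ or $\neg r_i$, so by soundness of the oracle on that branch the value of neuron $k$ lies in one of the two intervals, and hence in their hull. Your treatment of an infeasible branch and of intersecting with the root bounds is sound and makes explicit what the paper's short proof leaves implicit. That case is not really an obstacle, though: whenever the oracle returns any numeric interval for the infeasible branch, the $\min$/$\max$ hull still contains the feasible branch's interval, so your convention only matters for tightness, not soundness.
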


Proof is deferred to Appendix~\ref{sec:appendix-proofs}. With the implication graph, we can 
also use the SAT closure of the graph to derive new information. 

\subsection{SAT Closure of the Implication Graph}
\label{subsec:sat-closure}

The implication graph is built from binary implications and unit lemmas of
ReLU phase fixes, which are boolean variables. Thus, the implication graph
can be represented as a SAT formula $\Sigma$, where each unit lemma $r$ becomes the clause $(r)$, and
each edge $(r, r') \in E$ becomes the clause $(\neg r \vee r')$.

At each BaB node, let $R$ be the phase fixes already committed along the path from the
root to that node. Checking $\Sigma \cup R$ with a SAT solver is free relative to the tightening
oracle; if it is unsatisfiable, the node is infeasible and can be pruned without any further
splitting. Theorem~\ref{thm:sat-closure-pruning} formalizes the soundness of the closure.

\begin{theorem}[Soundness of SAT-Closure Pruning]
\label{thm:sat-closure-pruning}
Let $R$ be the phase fixes fixed at a BaB node and $\Sigma$ the implication graph's clauses. If
$\Sigma \cup R$ is unsatisfiable, the subproblem is UNSAT.
\end{theorem}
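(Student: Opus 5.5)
We argue by contraposition. Suppose the subproblem at the BaB node is not UNSAT, so there is an input $x$ in the input domain that satisfies the property $\phi(N(x))$ and respects every phase fix in $R$. From $x$ we build a propositional assignment $\sigma_x$ and show that it satisfies $\Sigma \cup R$. This contradicts the hypothesis that $\Sigma \cup R$ is unsatisfiable.

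\textbf{Step 1: the assignment.} For each unstable ReLU $i$, set $\sigma_x(r_i)=1$ if the pre-activation of neuron $i$ at $x$ is nonnegative (active phase), and $\sigma_x(r_i)=0$ otherwise. At a breakpoint where the pre-activation is exactly zero, both phase constraints hold. We then pick the phase that agrees with $R$ if $R$ mentions $i$, and the active phase otherwise.

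\textbf{Step 2: $\sigma_x$ satisfies $R$.} Every literal in $R$ is a phase constraint that $x$ obeys by assumption. By the choice in Step 1, $\sigma_x$ makes each such literal true.

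\textbf{Step 3: $\sigma_x$ satisfies the clauses of $\Sigma$.} We invoke Theorem~\ref{thm:soundness-construction}, assuming a sound tightening oracle, so that every unit lemma satisfies Definition~\ref{def:unit-lemma} and every edge satisfies Definition~\ref{def:binary-implication}.
\begin{itemize}
\item For a unit clause $(r)$: the fix $\{\neg r\}$ is globally infeasible. Hence no input of the original problem, and in particular not $x$, has the phase $\neg r$. So $\sigma_x(r)=1$.
\item For a clause $(\neg r \vee r')$ coming from an edge $r\to r'$: the pair $\{r,\neg r'\}$ is globally infeasible. So $x$ cannot realize both phases $r$ and $\neg r'$, and $\sigma_x$ satisfies the clause.
\end{itemize}

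\textbf{Main obstacle.} Step 3 only works if ``globally infeasible'' is read with respect to the same set of points that the node's subproblem ranges over. That set is the original input domain, together with whatever property constraint was present when the lookahead probed. Since the subproblem's feasible set is contained in this domain, infeasibility there transfers to the node.

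The second delicate point is the zero-pre-activation ties. A point with pre-activation exactly zero lies in both phases. Infeasibility of $\{r,\neg r'\}$ therefore excludes every point that can be assigned those phases under some choice, so any consistent choice of phases at $x$ satisfies each clause. We would state this as a convention: a phase fix denotes a closed half-space constraint on the pre-activation.

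With Steps 1–3, $\sigma_x$ satisfies $\Sigma\cup R$, which contradicts unsatisfiability. Hence the subproblem has no satisfying input and is UNSAT.
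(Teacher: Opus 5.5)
Your proof is correct and follows the same argument as the paper. A point of the node's subproblem satisfies every fix in $R$, and it satisfies every clause of $\Sigma$ by Theorem~\ref{thm:soundness-construction}, since the node's feasible set lies inside that of $Q$; so the point induces a model of $\Sigma \cup R$, contradicting unsatisfiability. Your version only adds detail the paper leaves implicit: reading the Boolean assignment off the point's phase pattern, and handling zero pre-activations through the closed half-space convention.
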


Proof is deferred to Appendix~\ref{sec:appendix-proofs}. $\Sigma \cup R$ only checks the clauses
the graph already has, computed against $Q$ alone at the root. Deeper in the tree, $R$ has already
tightened the problem substantially, and a pin that did not refute at the root may refute now, or force a phase it
did not force before. Catching that requires rerunning the lookahead procedure itself, not just
recombining existing clauses.

\subsection{Reprobing to Update the Implication Graph}
\label{subsec:reprobing}

The implication graph was built once, before any splits were made, so its facts only reflect what
is true at the root. By the time BaB reaches a deeper node, several ReLUs have already been fixed,
and those fixes narrow the space of remaining inputs. A phase fix that looked possible at the root
may now be provably impossible once the earlier fixes are accounted for, or it may now force some
other neuron's phase where it could not before. Reprobing catches these newly-provable facts by
simply rerunning the lookahead procedure at the current node instead of at the root.

Formally, let $Q$ denote the original verification query. Let
\[
Q_R := Q \wedge \bigwedge_{r \in R} r
\]
denote the query restricted to the current node's fixed phases. Reprobing is
Algorithm~\ref{alg:construct-implication-graph} rerun against $Q_R$ instead of $Q$, restricted to
the ReLUs still unstable at the node: each still-unstable neuron is branched on its active and
inactive phase, the tightening oracle is rerun under $Q_R$ together with that pin, and any
resulting infeasibility or forced phase is recorded as a unit lemma or edge, exactly as before. By
Theorem~\ref{thm:soundness-construction} with $Q$ replaced by $Q_R$, the new units and edges hold
in every model of $Q_R$. The trigger condition below ensures every fix in $R$ is entailed by $Q$,
so $Q_R$ has the same models as $Q$ and the new facts enter $\Sigma$ as root-global facts. This keeps the implication graph up to date with the present subproblem.

Reprobing is not run at every node. A pass sweeps every unstable neuron in both phases, so it
costs on the order of the initial construction, and running one per node would dominate the search. Instead a reprobe is triggered when two conditions hold. First, the search
must be at a point where no branching decisions are in force, so that the facts derived are
conditioned only on phases already established globally rather than on a path that is about to be
undone. Second, the set of established phases must have grown since the last pass, since $Q_R$ is
otherwise unchanged and a repeat sweep would rederive exactly what is already in $\Sigma$.

\subsection{Cut Vivification}
\label{subsec:cut-vivification}

With the implication graph, beyond using it to directly determine infeasibility, we can 
use it to vivify boolean cuts. Recall that a boolean cut is a collection of ReLU phase fixes
that are globally infeasible. The process of vivifying a cut is to remove phase fixes from 
the collection, creating a smaller set of phase fixes and a stronger cut. The implication 
graph provides the machinery to vivify cuts. 

Concretely, both BICCOS and PICID cuts are mined from subproblems determined infeasible, 
coming directly from the BaB search procedure. All of the phase fixes are used 
to derive the cut in some form, but in most cases, not all of them are necessary. 
The implication graph records which phase fixes are consequences of each other, so 
naturally, we can use the implication graph to determine which phase fixes in 
the cut are consequences of others.

\begin{algorithm}[t]
\caption{Cut Vivification}
\label{alg:vivify}
\begin{algorithmic}[1]
\REQUIRE cut $S$, clause formula $\Sigma$, query $Q$
\ENSURE vivified cut $S' \subseteq S$
\FOR{each $r \in S$}
    \IF{$\Sigma \cup (S \setminus \{r\})$ propagates $r$}
        \STATE $S \gets S \setminus \{r\}$
    \ENDIF
\ENDFOR
\STATE propagate $S$ through $\Sigma$
\IF{propagation derives both $r$ and $\neg r$}
    \RETURN the roots of the antecedent chains of $r$ and $\neg r$
\ENDIF
\IF{$\Sigma$ is unsatisfiable under the assumptions $S$}
    \RETURN the failed assumptions $S' \subseteq S$
\ENDIF
\STATE order $S$ as $r_{(1)}, \dots, r_{(n)}$, most tightening first
\STATE $P \gets \emptyset$
\FOR{$j = 1, \dots, n$}
    \STATE $P \gets P \cup \{r_{(j)}\}$
    \STATE $P \gets P \cup \{r' : \Sigma \cup P \text{ propagates } r'\}$
    \IF{propagation yields a conflict}
        \RETURN $\{r_{(1)}, \dots, r_{(j)}\}$
    \ENDIF
    \STATE tighten bounds under $P$ with a restricted oracle
    \IF{$P$ is infeasible}
        \RETURN $\{r_{(1)}, \dots, r_{(j)}\}$
    \ENDIF
\ENDFOR
\RETURN $S$
\end{algorithmic}
\end{algorithm}

Algorithm~\ref{alg:vivify} shows the cut vivification procedure. The first observation 
is that if a phase fix in a cut is implied in the implication graph by the
other phase fixes in the cut, then the phase fix is redundant and can be removed.
The following theorem formalizes this observation.

\begin{theorem}[Redundant Phase Fixes]
\label{thm:redundant-fix}
Let $S$ be a cut, let $r \in S$, and write $S' = S \setminus \{r\}$. If
\[
\Sigma \wedge \bigwedge_{r' \in S'} r' \models r,
\]
then $S'$ is itself a cut.
\end{theorem}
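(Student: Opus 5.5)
We argue semantically, interpreting each phase fix as a constraint on the inputs of the original query $Q$. Call a set of phase fixes $T$ a cut if no point satisfying $Q$ realises every fix in $T$ simultaneously; this is the notion of global infeasibility used in Section~\ref{subsec:cuts}. The plan is to show that every model of $Q \wedge \bigwedge_{r' \in S'} r'$ also realises $r$. Then any feasible point for $S'$ would be feasible for $S = S' \cup \{r\}$, contradicting that $S$ is a cut.

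The first step is to establish that every clause of $\Sigma$ is valid over the models of $Q$, meaning every concrete point satisfying $Q$ induces a phase assignment that satisfies $\Sigma$. Here we invoke Theorem~\ref{thm:soundness-construction}, assuming a sound oracle. A unit clause $(r_0)$ comes from a unit lemma, so $\{\neg r_0\}$ is globally infeasible by Definition~\ref{def:unit-lemma}, and every model of $Q$ must realise $r_0$. An edge clause $(\neg a \vee b)$ comes from an implication $a \to b$, so $\{a, \neg b\}$ is globally infeasible by Definition~\ref{def:binary-implication}, and every model of $Q$ satisfies $\neg a \vee b$. For facts added by reprobing we use the argument in Section~\ref{subsec:reprobing}: they are derived against $Q_R$, and $Q_R$ has the same models as $Q$, so they are root-global as well.

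The second step fixes an arbitrary point $x$ satisfying $Q$ and realising all fixes in $S'$, and reads off the boolean assignment $\beta_x$ giving the phase of each unstable ReLU at $x$. By the first step, $\beta_x \models \Sigma$, and by assumption $\beta_x \models \bigwedge_{r' \in S'} r'$. The hypothesis $\Sigma \wedge \bigwedge_{r' \in S'} r' \models r$ then yields $\beta_x \models r$, so $x$ realises $r$. Hence $x$ satisfies every fix in $S$, which contradicts $S$ being a cut. No such $x$ exists, so $S'$ is a cut.

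The main obstacle is the boundary case of a ReLU whose pre-activation is exactly $0$. At such a point both phases are realised, so the induced assignment $\beta_x$ is not unique. The fix is to treat $x$ as realising a phase literal whenever it satisfies that phase's linear constraints. We then choose $\beta_x$ consistently with the fixes in $S'$ and, at zero pre-activations, with whichever phase makes the relevant clauses hold. This works because the infeasibility facts behind $\Sigma$ hold for every choice of phase realised at $x$. Beyond this, everything reduces to the definitions together with Theorem~\ref{thm:soundness-construction}.
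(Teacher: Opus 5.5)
Your proof is correct and is essentially the paper's argument. Every clause of $\Sigma$ holds at every model of $Q$ by Theorem~\ref{thm:soundness-construction}, so a point satisfying $Q \wedge \bigwedge_{r' \in S'} r'$ also realises $r$ and hence all of $S$; the paper phrases this as $Q \wedge \bigwedge_{r' \in S} r'$ and $Q \wedge \bigwedge_{r' \in S'} r'$ having the same models, and your discussion of zero pre-activations is extra care the paper leaves implicit rather than a needed repair.
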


Proof is deferred to Appendix~\ref{sec:appendix-proofs}. The next observation is
that if a cut is unsatisfiable against the SAT closure of the implication graph,
then a subset of its phase fixes is already infeasible and that subset is a
stronger cut.

That subset is read directly off the implication graph. Assume every phase fix of the cut and
propagate through the graph. Each newly derived phase fix is forced by a single edge, so it can be
traced backwards along that edge to the fix that forced it, and that fix in turn to the one before
it, until the chain ends at either a phase fix of the cut or a unit lemma. A conflict occurs when
some phase fix and its complement are both derived. Tracing each of the two back to where its chain
began identifies exactly the phase fixes of the cut that participated in the refutation, and that
set is the unsatisfiable core. Since every derived fix has exactly one antecedent, each chain has a
single root, so a conflict found this way rests on at most two phase fixes of the cut. A cut of any
length therefore collapses in one step to a cut of size two, or to a single phase fix when one of
the two chains begins at a unit lemma.

In practice this trace is not performed by hand. The graph is already maintained as the formula
$\Sigma$, so the phase fixes of the cut are handed to the SAT
solver as assumptions and the solver is asked whether $\Sigma$ survives them. When it does not, the
solver reports which of the assumptions its refutation used, and that set of failed assumptions is
the core. Conflict analysis performs the same backward trace, over a resolution derivation rather
than over a single chain of edges, which is what allows the procedure to keep working once $\Sigma$
holds clauses that are not binary, such as previously vivified cuts. For those, propagation alone
may no longer find the conflict and the core may be larger than two, but the interface is unchanged.
The solve is conflict bounded, so its cost remains negligible against a single call to the
tightening oracle. Lemma~\ref{lem:graph-core} formalizes the core extraction.

\begin{lemma}[Core from the Implication Graph]
\label{lem:graph-core}
Suppose propagating $S$ through the binary fragment of $\Sigma$, its unit and binary
clauses, derives both $r$ and $\neg r$, and let $a_1$ and $a_2$ be
the roots of their antecedent chains. Then $S' = \{a_1, a_2\} \cap S$ satisfies $|S'| \le 2$ and
$\Sigma \cup S'$ is unsatisfiable.
\end{lemma}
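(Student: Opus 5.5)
The plan is to make the propagation step precise and then read the core off the two antecedent chains. We model unit propagation over the binary fragment of $\Sigma$ as an iterative process. Initially every literal of $S$ and every unit clause $(u) \in \Sigma$ is marked derived, with no antecedent; these are the possible \emph{roots}. At each later step a literal $\ell'$ becomes derived because some binary clause $(\neg \ell \vee \ell') \in \Sigma$ has $\ell$ already derived. We record $\ell$ as the unique antecedent of $\ell'$, keeping only the first derivation of each literal. Following antecedents from any derived literal strictly decreases the derivation time, so the chain is finite and ends at a root: either a literal of $S$ or a unit lemma of $\Sigma$. This justifies the notation $a_1$ for the root of the chain of $r$ and $a_2$ for the root of the chain of $\neg r$.

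The key claim is an invariant proved by induction along the chain. For every derived literal $\ell$ with root $a$, we show $\Sigma \cup (\{a\}\cap S) \models \ell$.
\begin{itemize}
\item \emph{Base case.} Here $\ell = a$ is a root. If $a \in S$, then $\ell$ follows trivially from $\{a\}$. If instead $a$ is a unit clause of $\Sigma$, then $\Sigma \models a$ directly. (If a root is both in $S$ and a unit clause, either reading works.)
\item \emph{Inductive step.} Suppose $\ell'$ has antecedent $\ell$ through the clause $(\neg\ell \vee \ell')\in\Sigma$. By the induction hypothesis, $\Sigma \cup (\{a\}\cap S)\models \ell$. Resolution with that clause then gives $\Sigma \cup (\{a\}\cap S)\models \ell'$, and the root is unchanged along the chain.
\end{itemize}
Applying the invariant to $r$ and to $\neg r$, and using monotonicity of entailment, we get that $\Sigma \cup S'$ entails both $r$ and $\neg r$, where $S' = \{a_1,a_2\}\cap S$. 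Hence $\Sigma \cup S'$ is unsatisfiable. The bound $|S'|\le 2$ is immediate, since $S'$ is a subset of the two-element set $\{a_1,a_2\}$.

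The main obstacle is the well-definedness of ``the root of the antecedent chain.'' A literal may be derivable in several ways, and propagation might in principle revisit literals. We handle this by fixing the first-derivation antecedent as above, which makes the antecedent relation a forest ordered by derivation time, so roots are unique. The other subtlety is that a root may be a unit lemma rather than an element of $S$. Intersecting with $S$ accounts for this, and the base case shows such roots cost nothing because they are already entailed by $\Sigma$. Finally, if one wishes to connect the result back to cuts via Theorem~\ref{thm:sat-closure-pruning} and Theorem~\ref{thm:soundness-construction}, unsatisfiability of $\Sigma \cup S'$ means that any node fixing $S'$ is pruned soundly. The lemma itself, however, only requires the propositional statement above.
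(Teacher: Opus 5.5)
Your proof is correct and follows the paper's route. In both, each root holds either as an assumption in $S'$ or as a unit clause of $\Sigma$, and the binary clauses along its antecedent chain carry that truth to $r$ and to $\neg r$. You phrase this as an induction on entailment rather than fixing a satisfying assignment of $\Sigma \cup S'$, and you also justify that roots are well defined via first-derivation antecedents, which the paper takes for granted.
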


Proof is deferred to Appendix~\ref{sec:appendix-proofs}. The intersection with $S$ covers the case
where a chain begins at a unit lemma rather than at the cut, as such a root holds under $\Sigma$ alone
and contributes nothing to $S'$. When both chains begin at unit lemmas, $S' = \emptyset$ and
$\Sigma$ is unsatisfiable on its own, which by Theorem~\ref{thm:sat-closure-pruning} with
$R = \emptyset$ means the query itself is UNSAT and the search can stop.
Corollary~\ref{cor:core-cut} formalizes this fact.

\begin{corollary}[Refutation by the Implication Graph]
\label{cor:core-cut}
Let $S$ be a cut and let $S' \subseteq S$ be an unsatisfiable core of $\Sigma$ under the assumptions
$S$. Then $S'$ is itself a cut.
\end{corollary}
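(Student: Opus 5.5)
The plan is to reduce the claim to the semantic soundness of $\Sigma$ and then argue exactly as in the proof of Theorem~\ref{thm:sat-closure-pruning}. Being a cut means being a set of phase fixes that is globally infeasible: no input satisfying $Q$ realizes all the phases in the set simultaneously. So it suffices to show that every point of $Q$ violates some fix in $S'$.

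\textbf{Step 1: soundness of $\Sigma$.} Every clause of $\Sigma$ holds at every point of $Q$, read as the boolean assignment induced by the ReLU phases at that point. Unit clauses and binary clauses come from the construction and from reprobing, and they are sound by Theorem~\ref{thm:soundness-construction}; reprobed facts are root-global by the trigger condition in Section~\ref{subsec:reprobing}. Previously vivified cuts added to $\Sigma$ are sound by induction on the order in which clauses are added, using Theorem~\ref{thm:redundant-fix} and the present corollary itself for the earlier cuts.

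\textbf{Step 2: contradiction.} Suppose some feasible point $x$ of $Q$ satisfies every fix in $S'$. Let $\sigma_x$ be its phase assignment; for a neuron with zero pre-activation, choose either phase consistently. By Step 1, $\sigma_x$ satisfies $\Sigma$, and by assumption it satisfies every literal of $S'$. This contradicts the hypothesis that $S'$ is an unsatisfiable core, that is, that $\Sigma \cup S'$ is propositionally unsatisfiable. Hence $S'$ is globally infeasible, which is the definition of a cut.

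It is worth noting that the hypothesis that $S$ is a cut is not actually needed; it only guarantees that $S' \subseteq S$ is a meaningful strengthening. Lemma~\ref{lem:graph-core} is the special case obtained from propagation.

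\textbf{Main obstacle.} The delicate part is Step 1, specifically that the boolean abstraction is faithful. The paper's definitions speak of global infeasibility of phase fixes rather than of clauses, so I would first state explicitly that a set of fixes $T$ being globally infeasible is equivalent to the clause $\bigvee_{t\in T}\neg t$ holding for every point of $Q$. I would also handle the boundary case where a pre-activation is exactly zero, so that both phases are consistent: choose the phase that agrees with $S'$ when possible, and note that both phase constraints are satisfied there. The edge clauses remain valid in that case, because infeasibility of a fix set refers to the constraint polytopes, which include the boundary.
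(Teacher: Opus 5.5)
Your proposal is correct and is essentially the paper's argument. The paper simply invokes Theorem~\ref{thm:sat-closure-pruning} with $R = S'$, whose proof is your Step~2: a point of $Q$ realizing $S'$ would induce an assignment satisfying $\Sigma \cup S'$. Your Step~1 induction usefully covers the vivified cuts stored in $\Sigma$, which the paper glosses over by citing only Theorem~\ref{thm:soundness-construction}. To be complete, that induction should also invoke the soundness of the originally mined cuts, Lemma~\ref{lem:graph-core}, and Theorem~\ref{thm:descent}, since an installed cut can come out of any stage of Algorithm~\ref{alg:vivify}.
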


This is Theorem~\ref{thm:sat-closure-pruning} applied with $R = S'$, where a set of phase fixes that
cannot be extended to a model of $\Sigma$ cannot be extended to a model of $Q$ either. Together with
Lemma~\ref{lem:graph-core} it licenses returning the core in place of the cut. Using exclusively
the implication graph has one drawback: it sees only binary implications and unit lemmas, so the
first two steps add negligible overhead but cannot act on combinations of phase fixes. For that we
turn to the tightening oracle, using a variant of the lookahead procedure.

Algorithm~\ref{alg:construct-implication-graph} pins one phase fix and asks whether the query
survives it, but nothing in the procedure requires the pin to be a single fix. Pinning several at
once and running the same restricted tightening oracle asks the same question of a set, and the
answer is what the criterion for vivification needs. The remaining phase fixes of the cut are
therefore added to a pin set one at a time, and after each addition the bounds are tightened. If the
pin set becomes infeasible after $j$ additions, those $j$ phase fixes are a cut on their own and
replace $S$. This is the only stage that consults the network rather than the graph, and it is the
only one that can act on combinations of phase fixes, because a combination is precisely what it
pins.

Since the fixes are added in order, the procedure can only return a prefix of the order it was
given. We sort the fixes by the tightening each one produced
when it was probed during the construction of the implication graph, most tightening first, breaking
ties by the instability of the neuron under the refined bounds. Both quantities are by-products of
Algorithm~\ref{alg:construct-implication-graph} and are already stored, so the ordering costs
nothing, and placing the most constraining fixes first makes an infeasibility surface at the
smallest $j$.

The two boolean stages continue to pay off inside this one. After each fix is pinned, the phase
fixes it implies in $\Sigma$ are added to the pin set as well. These additions are free and can only
help the oracle, and if propagation conflicts outright, the prefix is returned without invoking the
oracle at all. This is sound because each added fix is entailed by $\Sigma$ together with the
prefix, so the pin set and the prefix have the same models by the argument of
Theorem~\ref{thm:redundant-fix}. Theorem~\ref{thm:descent} formalizes the soundness of this.

\begin{theorem}[Soundness of the Descent]
\label{thm:descent}
Let $P_j = \{r_{(1)}, \dots, r_{(j)}\}$ and let $\hat{P}_j$ denote its closure under propagation in
$\Sigma$. If the tightening oracle finds $Q \wedge \bigwedge_{r \in \hat{P}_j} r$ infeasible, then
$P_j$ is a cut.
\end{theorem}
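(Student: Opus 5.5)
The plan is to reduce the statement to two facts: the tightening oracle is sound, and every clause of $\Sigma$ holds in every model of $Q$. Together these show that the models of $Q \wedge \bigwedge_{r \in P_j} r$ and of $Q \wedge \bigwedge_{r \in \hat{P}_j} r$ coincide. Recall that a cut is a set of phase fixes that is globally infeasible, i.e.\ $Q \wedge \bigwedge_{r\in P_j} r$ has no model. So it suffices to show that any model of $Q$ satisfying $P_j$ would also satisfy $\hat{P}_j$, and then invoke the oracle's verdict.

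First, every clause of $\Sigma$ is valid on the models of $Q$. By Theorem~\ref{thm:soundness-construction}, every unit lemma $r$ satisfies Definition~\ref{def:unit-lemma}, so $Q \models r$. Every edge $(r,r')$ satisfies Definition~\ref{def:binary-implication}, so $Q \wedge r \wedge \neg r' $ is infeasible, i.e.\ $Q \models \neg r \vee r'$. Edges added by reprobing are root-global by the argument in Section~\ref{subsec:reprobing}. The remaining clause type is vivified cuts; by Theorem~\ref{thm:redundant-fix} and Corollary~\ref{cor:core-cut} their negations are globally infeasible, so the corresponding clauses are also entailed by $Q$. Strictly this is an induction on the order in which clauses enter $\Sigma$, with the present theorem supplying the step for cuts produced by the descent itself, but each stage relies only on clauses that were already sound.

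Second, I will show that $\hat{P}_j$ is entailed by $Q \wedge P_j$, by induction on the propagation steps that build $\hat{P}_j$ from $P_j$. The base case is trivial. For the inductive step, a newly propagated fix $r'$ is derived from a clause $C \in \Sigma$ all of whose other literals are falsified by fixes already shown entailed. Since $Q \models C$, we get $Q \wedge P_j \models r'$. This is the argument of Theorem~\ref{thm:redundant-fix}, applied one literal at a time. Consequently any model $x$ of $Q$ satisfying all of $P_j$ also satisfies all of $\hat{P}_j$.

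Finally, I combine the two steps. Suppose for contradiction that some $x$ satisfies $Q \wedge \bigwedge_{r\in P_j} r$. Then $x$ satisfies $Q \wedge \bigwedge_{r\in\hat{P}_j} r$. Soundness of the restricted oracle means its bounds over-approximate every model of the pinned query, so it can report infeasibility only if no model exists. That contradicts the oracle's report, and hence $P_j$ is a cut.

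If propagation itself conflicts, the argument is the same: $\hat{P}_j$ contains a complementary pair, so no model exists, which matches the early return in Algorithm~\ref{alg:vivify}. The main obstacle is the first step's treatment of non-binary clauses in $\Sigma$, which makes the argument circular unless it is set up as an induction over the history of $\Sigma$. I would handle this by stating an invariant, "every clause in $\Sigma$ is entailed by $Q$", and checking that each operation (construction, reprobing, vivification) preserves it.
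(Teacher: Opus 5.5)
Your proof is correct and follows the paper's route: soundness of the oracle makes $\hat{P}_j$ infeasible, and the propagated fixes are stripped off using the entailment argument of Theorem~\ref{thm:redundant-fix}, applied one literal at a time. Your explicit invariant that every clause entering $\Sigma$ is entailed by $Q$, including reprobed facts and vivified cuts, is a worthwhile tightening. The paper's proofs attribute this property only to Theorem~\ref{thm:soundness-construction}, which covers just the construction-time units and edges.
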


This is a consequence of the previous theorems, as an empty bound implies infeasibility
by Theorem~\ref{thm:soundness-construction}, and the propagated fixes may be discarded from
$\hat{P}_j$ by Theorem~\ref{thm:redundant-fix}. A cut that reaches the end of
the descent unchanged is certified minimal with respect to the
tightening oracle.

\subsection{Inprocessing Framework with Lookahead}
\label{subsec:framework}

Figure~\ref{fig:inprocessing-loop} shows how the pieces fit together over the course of a solve.
Before the search begins, the lookahead procedure 
probes every unstable ReLU in both phases, and its
results seed three channels at once. Unit lemmas and binary implications become the clauses of
$\Sigma$, and the hull of each pair of probes tightens the root bounds.

The search then proceeds as usual, with two additions at each subproblem. On entering a subproblem
with committed phase fixes $R$, the solver first checks $\Sigma \cup R$. If it is unsatisfiable the
subproblem is infeasible by Theorem~\ref{thm:sat-closure-pruning} and is discarded before any bound
computation. Otherwise the phase fixes that $\Sigma$ propagates from $R$ are clamped into the
subproblem's bounds. Both are propagation over the clause database, so their cost is negligible.

When a subproblem is proven UNSAT, the underlying cut mechanism mines a cut from it as before, but
the cut is now vivified by Algorithm~\ref{alg:vivify} before it is installed. The shortened cut goes
into the cut pool and into $\Sigma$, where it strengthens every subsequent check, and any cut that
vivification reduces to a single phase fix is a unit lemma and enters the implication graph as a
node fact. Cuts are added to $\Sigma$ only after their own vivification attempt, since a cut already
present in $\Sigma$ refutes its own assumptions and the resulting core is the cut itself.

Those unit lemmas grow the set of established phases, which is the condition under which reprobing
runs. When the search next reaches a point with no branching decisions in force, the lookahead
procedure is rerun against $Q_R$ over the still-unstable neurons, and whatever it derives returns to
$\Sigma$ as root-global facts. The result is a loop rather than a pipeline: the graph informs the search,
the search produces cuts, vivification turns those cuts back into graph facts, and those facts
trigger the probing that grows the graph further.

\begin{figure}[t]
    \centering
    \resizebox{\linewidth}{!}{\begin{tikzpicture}[
    box/.style={rectangle, rounded corners=2pt, draw, thick, align=center,
                minimum height=11mm, minimum width=21mm, font=\large, fill=blue!18},
    probe/.style={box, fill=orange!22},
    cut/.style={box, fill=black!8},
    lbl/.style={font=\normalsize, align=center},
    ->/.style={-{Stealth[length=2.4mm]}}
]

\node[probe] (P) at (0, 2.6)     {Lookahead\\Probe};
\node[box]   (G) at (0, 0)       {Implication\\Graph $\Sigma$};
\node[box]   (B) at (4.3, 0)     {BaB\\Subproblem};
\node[cut]   (C) at (4.3, -2.6)  {Mined\\Cut};
\node[cut]   (V) at (0, -2.6)    {Vivification};
\node[probe] (R) at (-3.9, -1.3) {Reprobe};

\draw[->] (P) -- (G) node[lbl, midway, right] {units, edges,\\hull};

\draw[->] (G) -- (B) node[lbl, midway, above] {prune /\\clamp};
\draw[->] (B) -- (C) node[lbl, midway, right] {UNSAT};
\draw[->] (C) -- (V);

\draw[->] (V) -- (G) node[lbl, midway, right] {shortened\\cut};
\draw[->] (V) -- (R) node[lbl, midway, below left=-0.5mm] {fixed set\\grew};
\draw[->] (R) -- (G) node[lbl, midway, above left=-0.5mm] {facts\\under $Q_R$};

\end{tikzpicture}}
    \caption{Inprocessing over the course of a solve. Lookahead probe constructs the implication
    graph; cuts mined from UNSAT subproblems are vivified back
    into the graph; new phase fixes trigger reprobing.}
    \label{fig:inprocessing-loop}
\end{figure}

\section{Evaluation}
\label{sec:evaluation}

To evaluate the effectiveness of our proposed inprocessing framework, we implemented the framework 
in two state-of-the-art BaB-based verifiers in Marabou and \abcrown. These two solvers represent 
two different solver paradigms: Marabou~\citep{wu2024marabou2} is a CPU-based verifier that employs an SMT-based
search procedure and PICID, while \abcrown~\citep{wang2021beta} uses GPU-accelerated bound propagation and BICCOS. We evaluate our framework against
the most recent versions of these verifiers, which include both PICID and BICCOS respectively.

\subsection{Case Study on Marabou}
\label{subsec:marabou}

\subsubsection{Experimental Setup}
\label{subsec:setup-marabou}
For experimentation on Marabou, we evaluated our framework on several standard benchmarks
from recent literature using Marabou, including the ACASXu, MNIST and SafeNLP benchmarks.
The ACASXu benchmark~\citep{julian2016policy,katz2017reluplex} is a set of neural networks for airborne collision avoidance systems,
and has been widely used to evaluate different verifiers. The MNIST dataset~\citep{lecun1998gradient}
includes various feed-forward neural networks for handwritten digit recognition.
SafeNLP is a set of neural networks for natural language processing tasks and has been a staple
benchmark in the most recent editions of the VNN-COMP~\citep{brix2023vnncomp,brix2024vnncomp,kaulen2025vnncomp}.

CaDiCaL~\citep{BiereFallerFazekasFleuryFroleyks-CAV24} was used as the SAT solver for the implication graph, and the tightening oracle
included one DeepPoly~\citep{singh2019deeppoly} pass and a simplex feasibility check capped at 400 pivots.
The experiments were run on a server with 2.6-GHz AMD CPUs, with 4 CPU cores allocated per benchmark.
Each benchmark was given a 20 minute time limit and a 32 GB memory limit.

\subsubsection{Results}
\label{subsec:results-marabou}

Table~\ref{tab:marabou} shows the results of the inprocessing framework on Marabou. 
In total, the framework leads to fewer timeouts on every
benchmark. It solves substantially more UNSAT instances on every benchmark, and on SAT
instances it matches PICID on ACASXu, solves two more on MNIST, and solves fewer only
on SafeNLP. The SafeNLP regression reflects an asymmetry
between deducing UNSAT versus SAT. Proving unsatisfiability is a universal claim, 
requiring every subdomain to be proven UNSAT. Proving satisfiability requires only 
one counterexample, which is why attack algorithms as shown in Section~\ref{subsec:abcrown} 
are usually more effective at deducing SAT than BaB. Inprocessing slightly changes the
order in which domains are explored, which by chance can leave a domain with a
counterexample explored later. Cactus plots for both runtime
and visited states on the Marabou benchmarks are presented in Appendix~\ref{sec:appendix-cactus}.

\begin{table}[t]
\centering
\footnotesize
\setlength{\tabcolsep}{1.4pt}
\begin{tabular*}{\linewidth}{@{\extracolsep{\fill}}lrrrrrrr}
\toprule
 & \multicolumn{3}{c}{Instances} & \multicolumn{2}{c}{Avg.\ time (s)} & \multicolumn{2}{c}{Avg.\ states} \\
\cmidrule(lr){2-4}\cmidrule(lr){5-6}\cmidrule(lr){7-8}
Configuration & UNSAT & SAT & T/O & UNSAT & SAT & UNSAT & SAT \\
\midrule
\multicolumn{8}{l}{\textit{ACASXu}} \\
PICID           & 99  & 24 & 40 & 142.2 & \textbf{53.2}  & 1724.1 & \textbf{502.9} \\
+ inprocessing & \textbf{124} & 24 & \textbf{15} & \textbf{75.6} & 115.5 & \textbf{795.1} & 993.5 \\
\addlinespace
\multicolumn{8}{l}{\textit{MNIST}} \\
PICID           & 102 & 27 & 122 & \textbf{12.8} & \textbf{172.0} & 16.0 & 3041.1 \\
+ inprocessing & \textbf{137} & \textbf{29} & \textbf{85} & 37.7 & 185.2 & \textbf{13.3} & \textbf{2685.2} \\
\addlinespace
\multicolumn{8}{l}{\textit{SafeNLP}} \\
PICID           & 242 & \textbf{596} & 238 & 42.5 & \textbf{1.2} & 3982.7 & \textbf{84.5} \\
+ inprocessing & \textbf{266} & 575 & \textbf{235} & \textbf{25.9} & 1.7 & \textbf{2308.8} & 134.2 \\
\bottomrule
\end{tabular*}
\caption{Results on Marabou.}
\label{tab:marabou}
\end{table}

Table~\ref{tab:marabou-overhead} shows the overhead of the lookahead probing and reprobing phases
on Marabou. On SafeNLP probing takes up less than 0.1\% of the total wall clock time, and on
ACASXu 2.6\%. On MNIST it accounts for 19.2\%, though the gap between the median and mean
per-instance probing times shows this is driven by a few outliers, and inprocessing still solves 35
more UNSAT instances there.
Appendix~\ref{sec:appendix-yield} reports how often each component fires and succeeds.

\begin{table}[t]
\centering
\footnotesize
\begin{tabular*}{\linewidth}{@{\extracolsep{\fill}}lrrrr}
\toprule
 & \multicolumn{3}{c}{Probing time (s)} & \\
\cmidrule(lr){2-4}
Benchmark & Median & Mean & Max & Wall clock (\%) \\
\midrule
ACASXu  & 3.83  & 5.43   & 13.85  & 2.6 \\
MNIST   & 38.42 & 135.31 & 491.73 & 19.2 \\
SafeNLP & 0.24  & 0.24   & 0.41   & ${<}0.1$ \\
\bottomrule
\end{tabular*}
\caption{Overhead of inprocessing on Marabou.}
\label{tab:marabou-overhead}
\end{table}

\subsection{Case Study on \abcrown}
\label{subsec:abcrown}

\subsubsection{Experimental Setup}
\label{subsec:setup-abcrown}
For experimentation on \abcrown, we evaluated our framework on the CIFAR and TinyImageNet
benchmarks drawn from recent literature using \abcrown~\citep{zhou2024scalable,zhou2025clip}. These consist of CNNs
for image classification ranging from 14.4 million to 31.6 million parameters. The
CIFAR benchmarks include adversarially trained networks, making them challenging verification
tasks.

CaDiCaL~\citep{BiereFallerFazekasFleuryFroleyks-CAV24} was used as the
SAT solver for the implication graph through the PySAT Python API, 
with the tightening oracle implemented as a single
$\alpha$-CROWN bound propagation pass.
The experiments were run on a server with 2.6-GHz AMD CPUs and NVIDIA A100 GPUs, with
96 CPU cores and one A100 GPU allocated per benchmark. Each benchmark was given a
128 GB memory limit.

\subsubsection{Results}
\label{subsec:results-abcrown}

Table~\ref{tab:abcrown} presents the results of the inprocessing framework. \abcrown 
first runs PGD attack to find counterexamples, so on all four benchmarks all the 
SAT instances are solved without entering BaB. On UNSAT instances, the
inprocessing framework solves more instances and reduces the average
time on all four benchmarks. Cactus plots for both runtime and visited states
are presented in Appendix~\ref{sec:appendix-cactus}.

\begin{table}[t]
\centering
\footnotesize
\setlength{\tabcolsep}{4pt}
\begin{tabular*}{\linewidth}{@{\extracolsep{\fill}}lrrrrrr}
\toprule
 & \multicolumn{3}{c}{Instances} & \multicolumn{2}{c}{Avg.\ time (s)} & Avg. \\
\cmidrule(lr){2-4}\cmidrule(lr){5-6}
Configuration & UNSAT & SAT & T/O & UNSAT & SAT & states \\
\midrule
\multicolumn{7}{l}{\textit{CIFAR CNN-A-Adv}} \\
BICCOS         & 91 & 100 & 9 & 5.1 & 0.1 & 2446.6 \\
+ inprocessing & \textbf{92} & 100 & \textbf{8} & \textbf{4.3} & 0.1 & \textbf{1583.3} \\
\addlinespace
\multicolumn{7}{l}{\textit{CIFAR CNN-A-Mix}} \\
BICCOS         & 88 & 94 & 18 & 13.3 & 0.0 & 4434.0 \\
+ inprocessing & \textbf{90} & 94 & \textbf{16} & \textbf{10.1} & 0.0 & \textbf{2712.8} \\
\addlinespace
\multicolumn{7}{l}{\textit{CIFAR CNN-B-Adv}} \\
BICCOS         & 95 & 70 & 35 & 17.9 & 0.1 & 9095.9 \\
+ inprocessing & \textbf{97} & 70 & \textbf{33} & \textbf{16.1} & \textbf{0.0} & \textbf{2652.3} \\
\addlinespace
\multicolumn{7}{l}{\textit{TinyImageNet}} \\
BICCOS         & 131 & 43 & 26 & 31.0 & 2.4 & \textbf{1683.5} \\
+ inprocessing & \textbf{135} & 43 & \textbf{22} & \textbf{30.0} & \textbf{1.0} & 1762.7 \\
\bottomrule
\end{tabular*}
\caption{Results on \abcrown. Time limits: CIFAR CNN-A-Adv and CIFAR CNN-A-Mix: 200s,
CIFAR CNN-B-Adv: 350s, TinyImageNet: 360s.}
\label{tab:abcrown}
\end{table}

Table~\ref{tab:abcrown-overhead} shows the overhead of the probing of the inprocessing framework. 
Overall, lookahead takes between 0.2\% and 6\% of the total wall clock time. 
Appendix~\ref{sec:appendix-yield} reports how often each component fires and succeeds.

\begin{table}[t]
\centering
\footnotesize
\setlength{\tabcolsep}{4pt}
\begin{tabular*}{\linewidth}{@{\extracolsep{\fill}}lrrrr}
\toprule
 & \multicolumn{3}{c}{Probing time (s)} & \\
\cmidrule(lr){2-4}
Benchmark & Median & Mean & Max & Wall clock (\%) \\
\midrule
CIFAR CNN-A-Adv & 0.18 & 0.18 & 0.30 & 0.2 \\
CIFAR CNN-A-Mix & 0.29 & 0.29 & 0.44 & 0.4 \\
CIFAR CNN-B-Adv & 4.12 & 4.14 & 7.33 & 2.0 \\
TinyImageNet    & 2.39 & 2.86 & 5.81 & 6.0 \\
\bottomrule
\end{tabular*}
\caption{Overhead of inprocessing on \abcrown.}
\label{tab:abcrown-overhead}
\end{table}

\section{Conclusion}
\label{sec:conclusion}

In this paper, we introduced an inprocessing framework for BaB-based neural network verification. 
The framework leverages the lookahead procedure to construct an implication graph of ReLU phases, 
which is then used to prune subproblems and vivify boolean cuts. We implemented the framework 
in two state-of-the-art BaB-based verifiers, Marabou and \abcrown, and evaluated it on several
benchmarks. Our results show that the inprocessing framework improves the performance of both verifiers.

This work suggests several directions for future work. The implication graph is currently exploited
through its SAT closure, which captures only the lemmas entailed over binary implications.
Propagation redundant (PR) clauses only need to leave satisfiability intact~\citep{heule2017short}, and
harvesting them from the graph, as SDCL does~\citep{heule2019encoding}, would
derive lemmas no closure can produce. A second direction is to lift inprocessing above the
propositional abstraction, reasoning about linear constraints rather than boolean phase variables to
prune subproblems and vivify cuts the boolean abstraction cannot. Finally, our lemmas are facts about
a network under an input domain, not about the query that produced them.
\citet{elsaleh2026incremental} show such conflicts survive query refinement, so carrying the graph
across queries would improve incremental verification.


\bibliography{bibli}


\clearpage

\appendix

\section{Proofs}
\label{sec:appendix-proofs}

\begin{proof}[Proof of Theorem~\ref{thm:soundness-construction}]
Suppose the oracle computes bound $[\mathrm{lb}_k, \mathrm{ub}_k]$ for neuron $k$ under phase fixes
$R$, and let $r''$ be a further fix inconsistent with $[\mathrm{lb}_k, \mathrm{ub}_k]$. Any point
satisfying $R' = R \cup \{r''\}$ satisfies $R$, so its pre-activation $k$ lies in
$[\mathrm{lb}_k, \mathrm{ub}_k]$, contradicting $r''$; thus $R'$ is infeasible.

\emph{Unit lemmas.} The algorithm records $\neg r$ as a unit lemma exactly when the oracle finds $r$
itself infeasible, i.e., the bounds computed under $\{r\}$ are already empty. By the argument above,
$\{r\}$ is globally infeasible.

\emph{Binary implications.} The algorithm adds the edge $(r, r_j')$ exactly when the bounds computed
under $\{r\}$ already confine ReLU $j$ to phase $r_j'$, i.e., every point satisfying $Q \wedge r$
satisfies $r_j'$. The complementary fix $\neg r_j'$ requires $j$ to be in the other phase, which no
such point can satisfy, thus $Q \wedge r \wedge \neg r_j'$ has no solution.
\end{proof}

\begin{proof}[Proof of Theorem~\ref{thm:hull-bound}]
$r_i$ and $\neg r_i$ are complementary, so every point of the input domain satisfies one of them,
and thus lies in $[\mathrm{lb}_k^{r_i}, \mathrm{ub}_k^{r_i}]$ or $[\mathrm{lb}_k^{\neg r_i},
\mathrm{ub}_k^{\neg r_i}]$ for neuron $k$. Their hull therefore contains every such point.
\end{proof}

\begin{proof}[Proof of Theorem~\ref{thm:sat-closure-pruning}]
Every clause of $\Sigma$ holds in every model of $Q$ by Theorem~\ref{thm:soundness-construction},
and every $r \in R$ holds at the node by construction, so a model of $Q \wedge \bigwedge_{r \in R} r$
would satisfy $\Sigma \cup R$. Since $\Sigma \cup R$ has no satisfying assignment, no such model
exists.
\end{proof}

\begin{proof}[Proof of Theorem~\ref{thm:redundant-fix}]
Since $S' \subseteq S$, every point satisfying $Q \wedge \bigwedge_{r' \in S} r'$ satisfies
$Q \wedge \bigwedge_{r' \in S'} r'$. Conversely, every clause of $\Sigma$ holds in every model of $Q$
by Theorem~\ref{thm:soundness-construction}, so a point satisfying $Q \wedge \bigwedge_{r' \in S'} r'$
satisfies $\Sigma \wedge \bigwedge_{r' \in S'} r'$ and satisfies $r$ as well.
Therefore, it satisfies $Q \wedge \bigwedge_{r' \in S} r'$. The two have the same models, so $S'$ is
infeasible whenever $S$ is.
\end{proof}

\begin{proof}[Proof of Lemma~\ref{lem:graph-core}]
$|S'| \le 2$ is immediate, as $S'$ is a subset of $\{a_1, a_2\}$. Consider any assignment satisfying
$\Sigma \cup S'$. Each root $a_i$ holds under it, either $a_i \in S'$ and it is assumed, or $a_i$ is
a unit lemma and its clause lies in $\Sigma$. The antecedent chain from $a_i$ is a sequence of phase
fixes in which each is forced by the previous one through an edge of the implication graph, and each
such edge contributes the clause $(\neg r \vee r')$ to $\Sigma$. Following the chain from $a_1$, the
assignment therefore satisfies $r$, and following the chain from $a_2$ it satisfies $\neg r$. No
assignment does both, so $\Sigma \cup S'$ is unsatisfiable.
\end{proof}

\section{Contribution of Each Component}
\label{sec:appendix-yield}

Table~\ref{tab:yield-marabou} shows how many times each component of the inprocessing framework 
attempted to fire, how many times it succeeded, and the average number of attempts and successes per 
benchmark. Table~\ref{tab:yield-abcrown} shows the same for \abcrown. Vivification fired nearly
every time on Marabou, and some of the time on \abcrown. SAT closure pruned only a small percentage 
of the subproblems attempted, but this is expected as SAT closure is attempted every search state.
For hull bounds, an attempt is an unstable neuron probed and a success is a bound tightening it
produced, so the ratio exceeds one where a single neuron tightens several bounds. Overall, 
on Marabou with the exception of SafeNLP the hull bound tightens each unstable neuron multiple times. 
On \abcrown, the hull bound tightens between 25\% and 35\% of the unstable neurons. 

\begin{table}[t]
\centering
\footnotesize
\setlength{\tabcolsep}{3.5pt}
\begin{tabular*}{\linewidth}{@{\extracolsep{\fill}}lrrrr}
\toprule
Component & Active & Avg.\ att. & Avg.\ succ. & Succ./att. \\
\midrule
\multicolumn{5}{l}{\textit{ACASXu}} \\
Vivification & 27  & 18.5  & 17.7  & 0.96 \\
SAT closure  & 147 & 728.7 & 8.0   & 0.01 \\
Hull bounds  & 147 & 182.3 & 733.2 & 4.02 \\
\addlinespace
\multicolumn{5}{l}{\textit{MNIST}} \\
Vivification & 16  & 15.5  & 14.4  & 0.93 \\
SAT closure  & 160 & 854.0 & 6.7   & 0.01 \\
Hull bounds  & 160 & 352.7 & 1021.9 & 2.90 \\
\addlinespace
\multicolumn{5}{l}{\textit{SafeNLP}} \\
Vivification & 61  & 27.3  & 26.3  & 0.96 \\
SAT closure  & 365 & 210.8 & 1.5   & 0.01 \\
Hull bounds  & 365 & 104.6 & 3.0   & 0.03 \\
\bottomrule
\end{tabular*}
\caption{Per-component yield on Marabou.}
\label{tab:yield-marabou}
\end{table}

\begin{table}[t]
\centering
\footnotesize
\setlength{\tabcolsep}{3.5pt}
\begin{tabular*}{\linewidth}{@{\extracolsep{\fill}}lrrrr}
\toprule
Component & Active & Avg.\ att. & Avg.\ succ. & Succ./att. \\
\midrule
\multicolumn{5}{l}{\textit{CIFAR CNN-A-Adv}} \\
Vivification    & 20  & 364.1  & 81.1  & 0.22 \\
SAT closure     & 27  & 796.3  & 46.6  & 0.06 \\
Hull bounds     & 27  & 862.6  & 214.1 & 0.25 \\
\addlinespace
\multicolumn{5}{l}{\textit{CIFAR CNN-A-Mix}} \\
Vivification    & 43  & 293.2  & 85.9  & 0.29 \\
SAT closure     & 55  & 1672.5 & 63.5  & 0.04 \\
Hull bounds     & 55  & 1238.5 & 320.6 & 0.26 \\
\addlinespace
\multicolumn{5}{l}{\textit{CIFAR CNN-B-Adv}} \\
Vivification    & 74  & 324.6  & 50.5  & 0.16 \\
SAT closure     & 84  & 5278.9 & 88.5  & 0.02 \\
Hull bounds     & 84  & 2286.1 & 721.3 & 0.32 \\
\addlinespace
\multicolumn{5}{l}{\textit{TinyImageNet}} \\
Vivification    & 84  & 271.2  & 128.1 & 0.47 \\
SAT closure     & 117 & 1039.7 & 69.4  & 0.07 \\
Hull bounds     & 117 & 1700.2 & 589.6 & 0.35 \\
\bottomrule
\end{tabular*}
\caption{Per-component yield on \abcrown.}
\label{tab:yield-abcrown}
\end{table}

\section{Additional Cactus Plots}
\label{sec:appendix-cactus}

Figures~\ref{fig:cactus-marabou-unsat-time} through~\ref{fig:cactus-marabou-sat-visited} report
runtime and domains visited on the Marabou benchmarks, and
figures~\ref{fig:cactus-abcrown-unsat-time} and~\ref{fig:cactus-abcrown-visited} report the same
for \abcrown. For \abcrown, only UNSAT instances are shown, as its PGD attack solves every SAT
instance without entering BaB. Legends report the number of instances solved by each
configuration.

\begin{figure}[t]
\centering
\includegraphics[width=\linewidth]{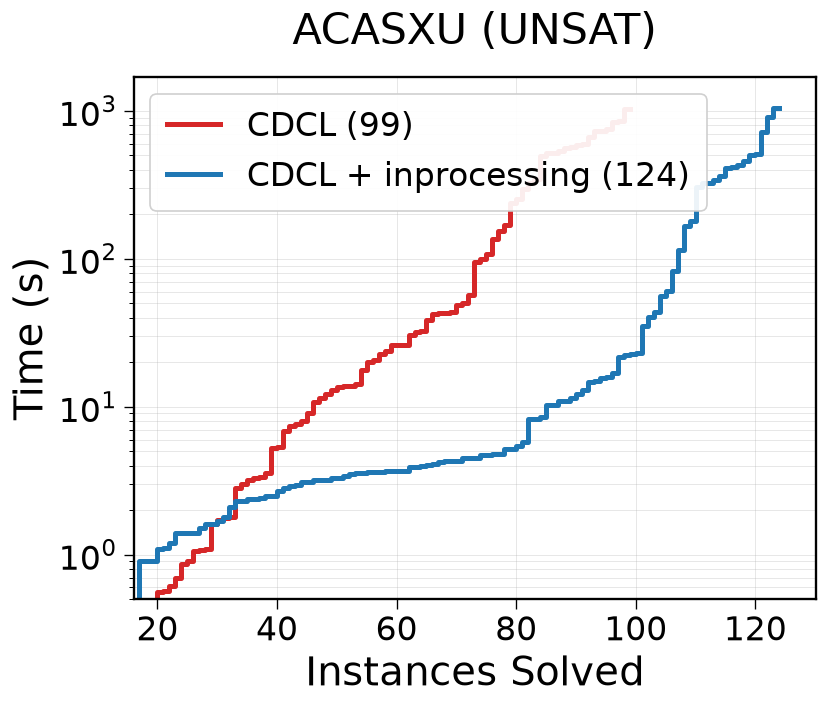}

\includegraphics[width=\linewidth]{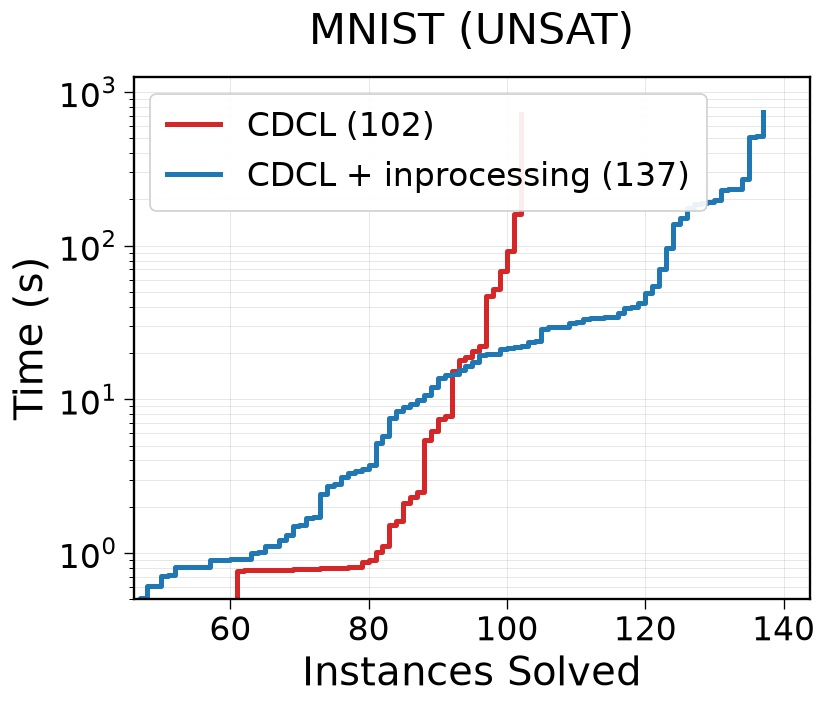}

\includegraphics[width=\linewidth]{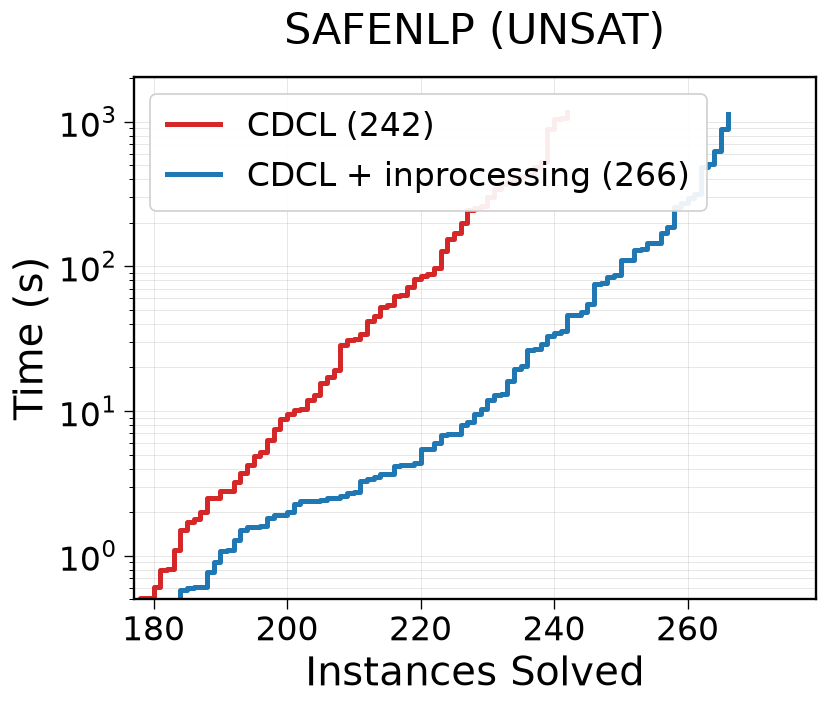}
\caption{Cactus plots of runtime on the UNSAT instances of the Marabou benchmarks.}
\label{fig:cactus-marabou-unsat-time}
\end{figure}

\begin{figure}[t]
\centering
\includegraphics[width=\linewidth]{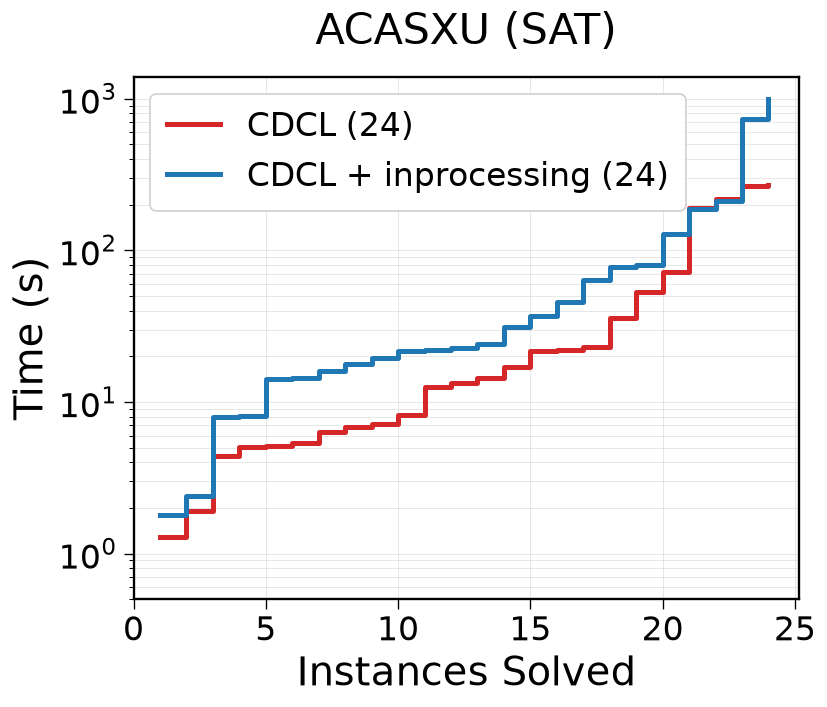}

\includegraphics[width=\linewidth]{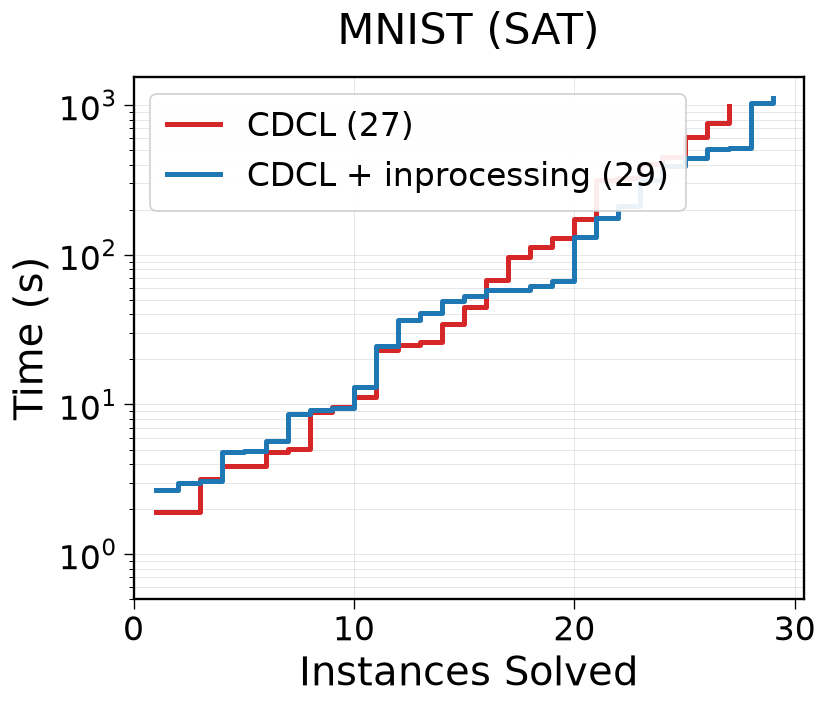}

\includegraphics[width=\linewidth]{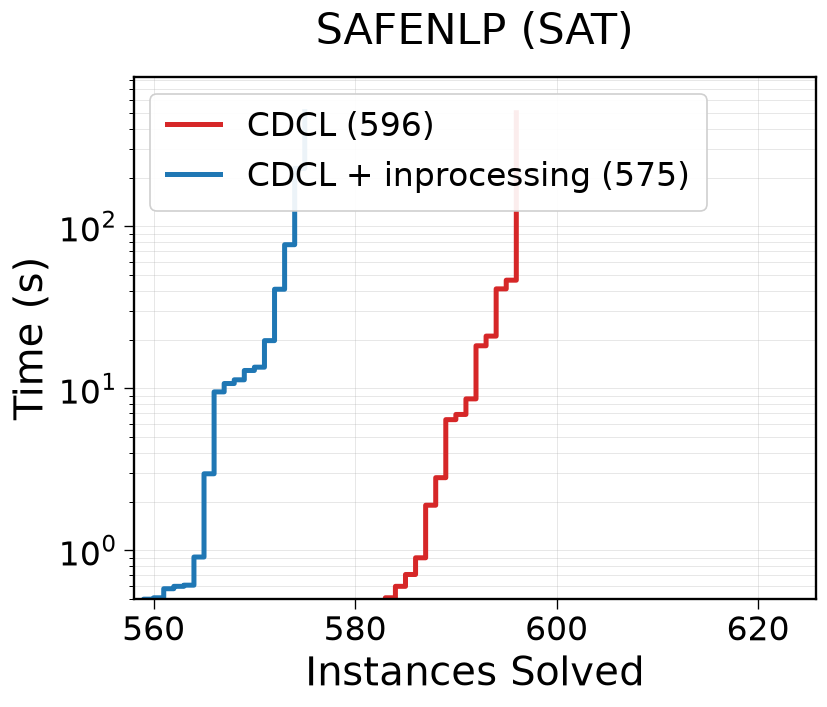}
\caption{Cactus plots of runtime on the SAT instances of the Marabou benchmarks.}
\label{fig:cactus-marabou-sat-time}
\end{figure}

\begin{figure}[t]
\centering
\includegraphics[width=\linewidth]{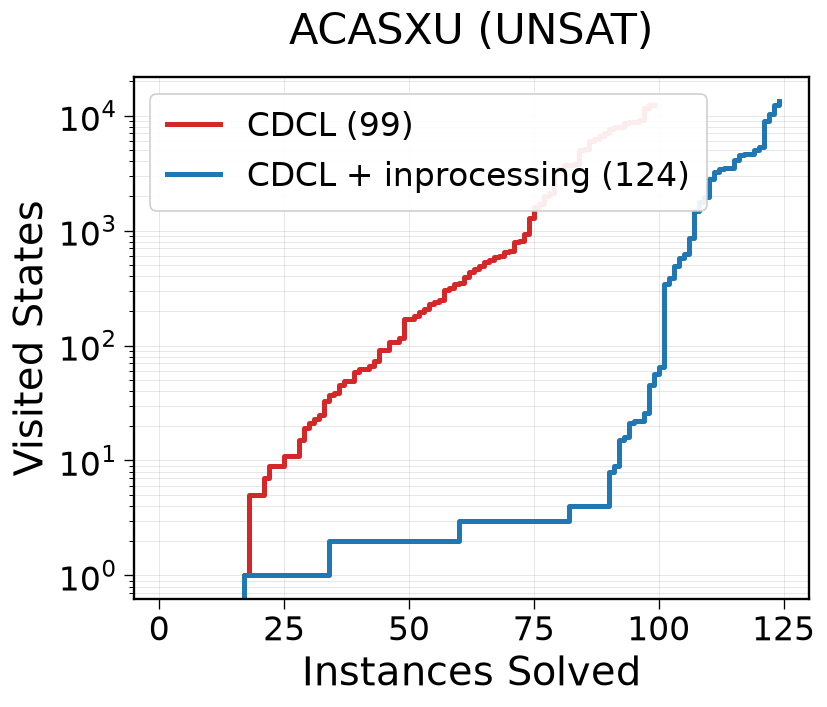}

\includegraphics[width=\linewidth]{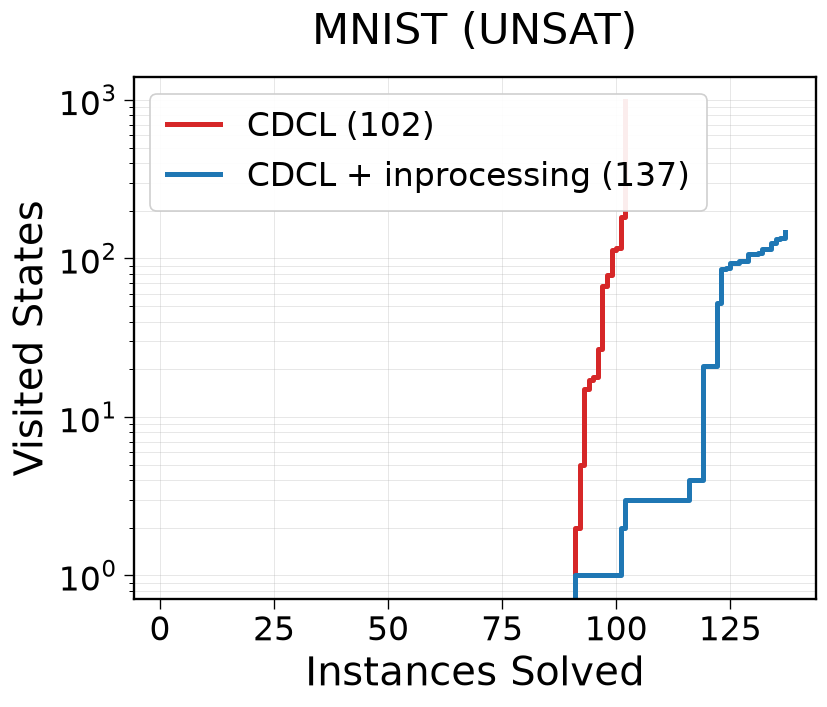}

\includegraphics[width=\linewidth]{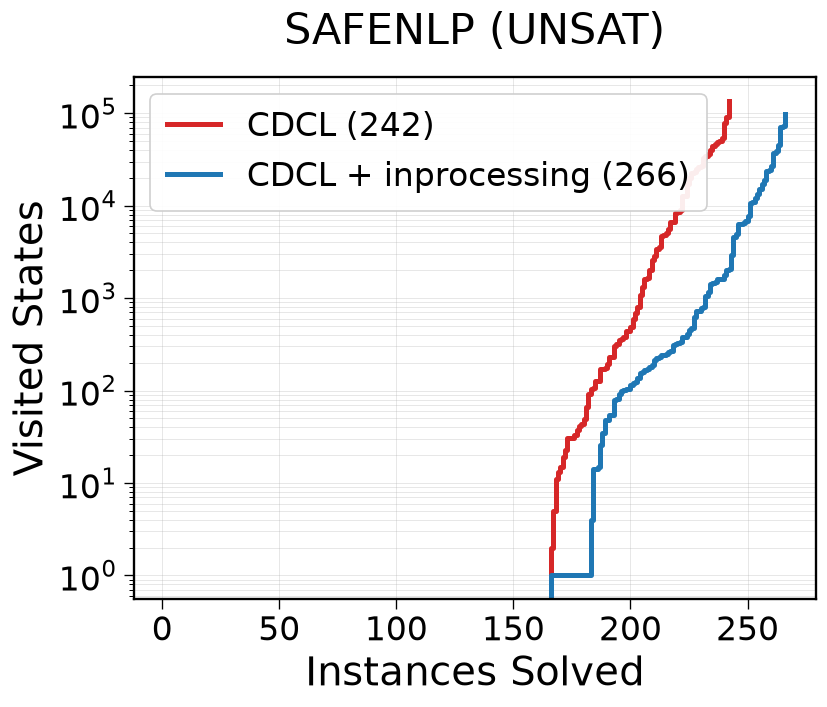}
\caption{Cactus plots of domains visited on the UNSAT instances of the Marabou benchmarks.}
\label{fig:cactus-marabou-unsat-visited}
\end{figure}

\begin{figure}[t]
\centering
\includegraphics[width=\linewidth]{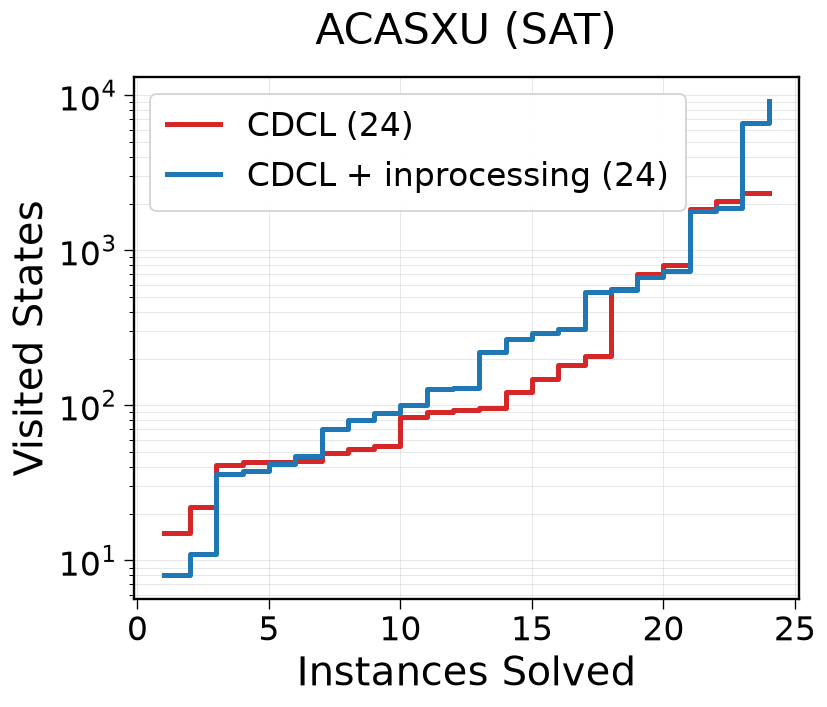}

\includegraphics[width=\linewidth]{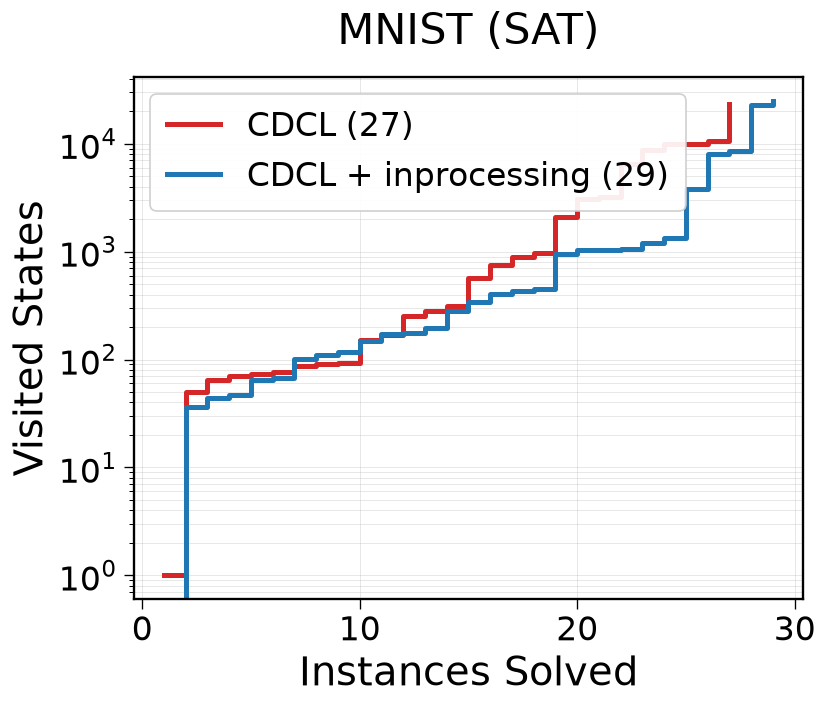}

\includegraphics[width=\linewidth]{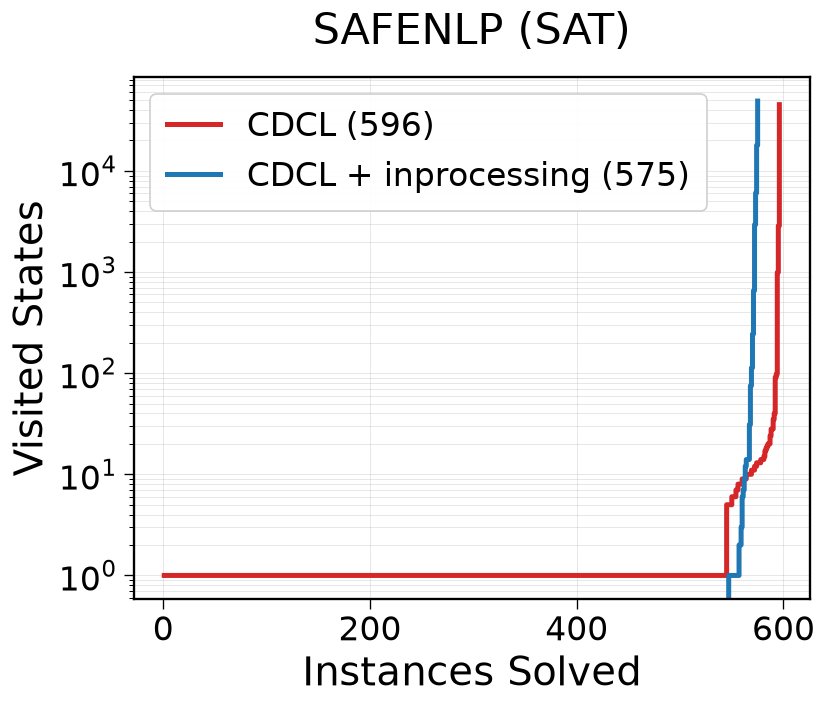}
\caption{Cactus plots of domains visited on the SAT instances of the Marabou benchmarks.}
\label{fig:cactus-marabou-sat-visited}
\end{figure}

\begin{figure}[t]
\centering
\includegraphics[width=0.75\linewidth]{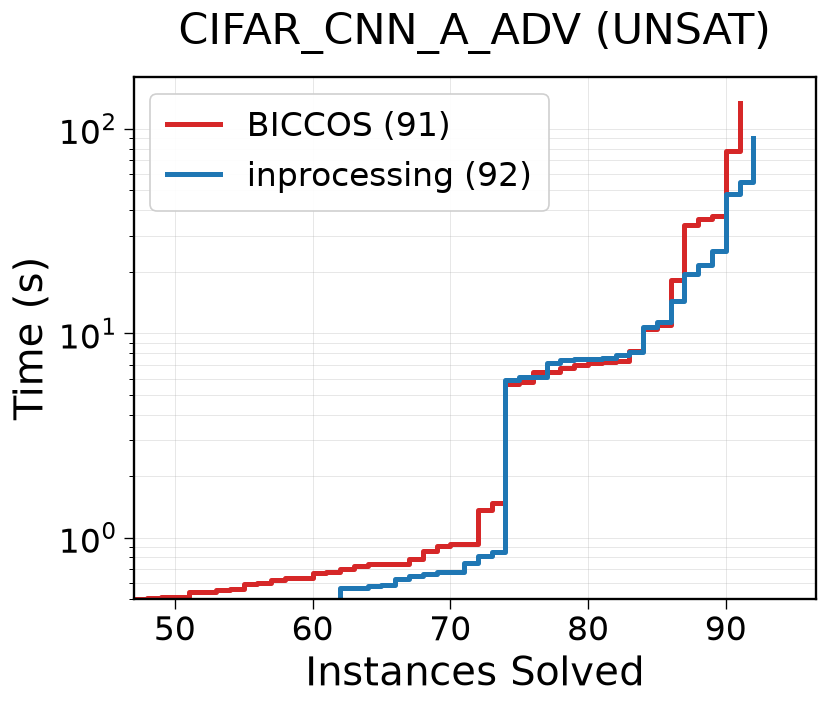}

\includegraphics[width=0.75\linewidth]{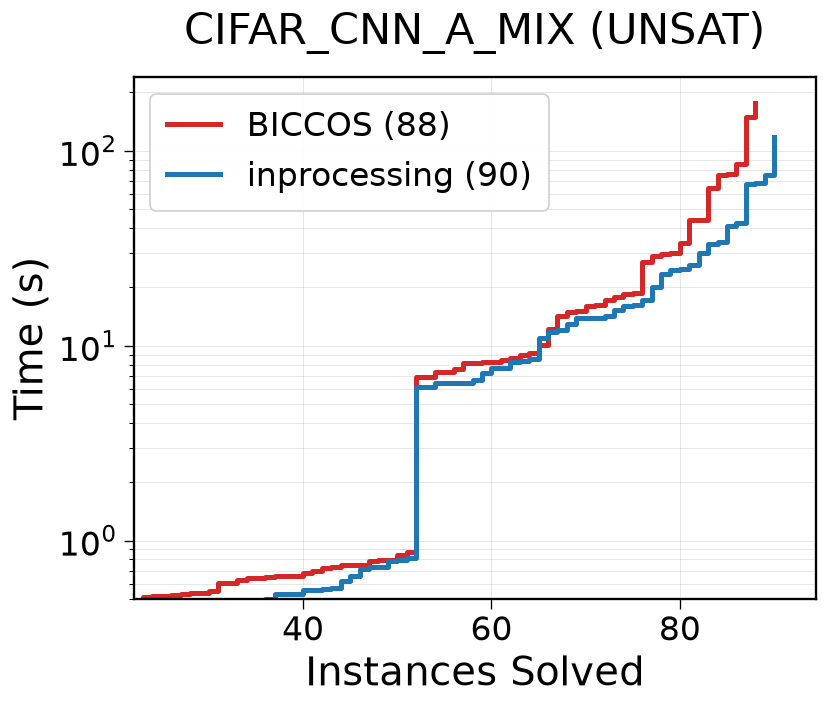}

\includegraphics[width=0.75\linewidth]{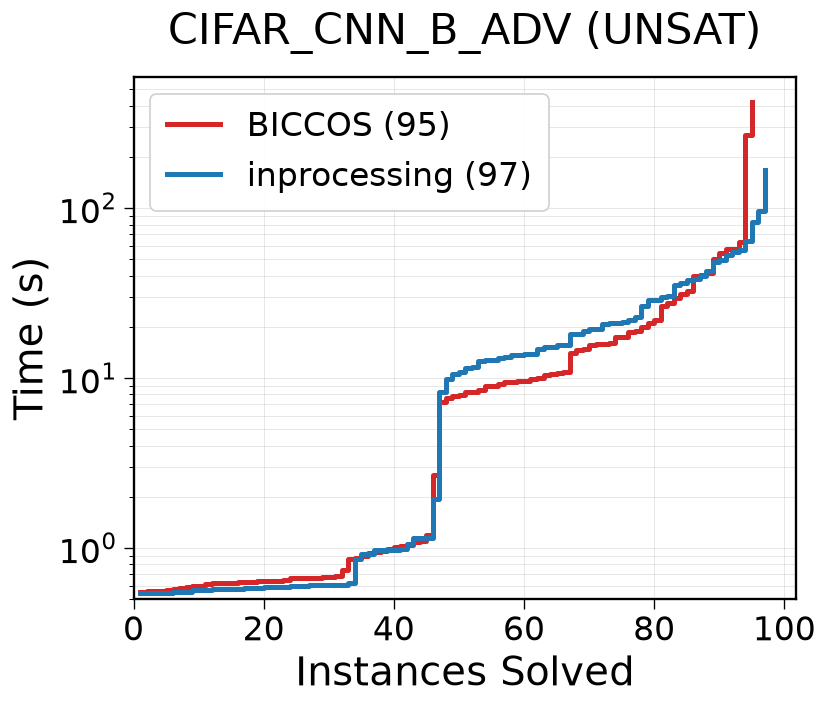}

\includegraphics[width=0.75\linewidth]{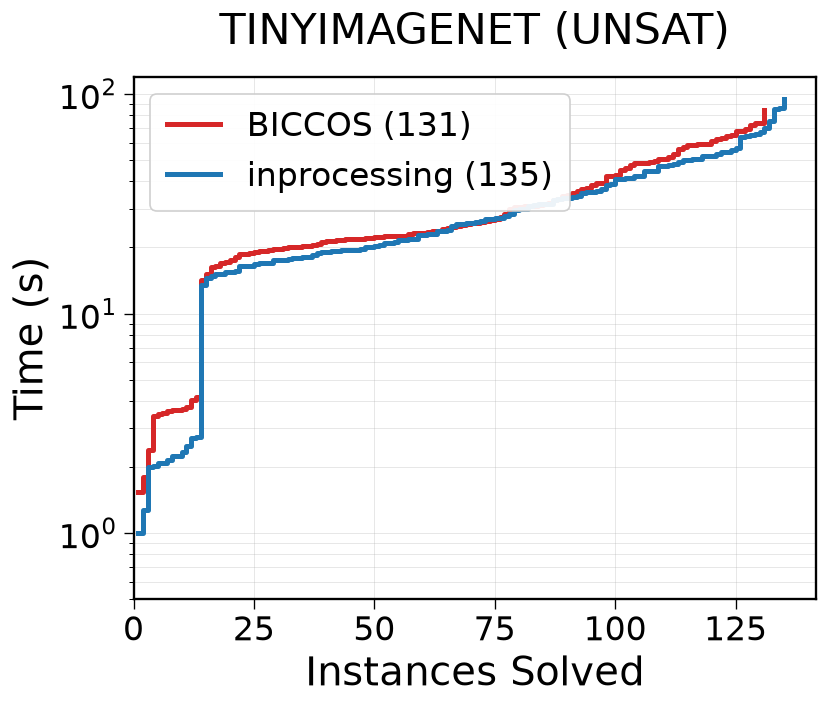}
\caption{Cactus plots of runtime on the UNSAT instances of the \abcrown benchmarks.}
\label{fig:cactus-abcrown-unsat-time}
\end{figure}

\begin{figure}[t]
\centering
\includegraphics[width=0.75\linewidth]{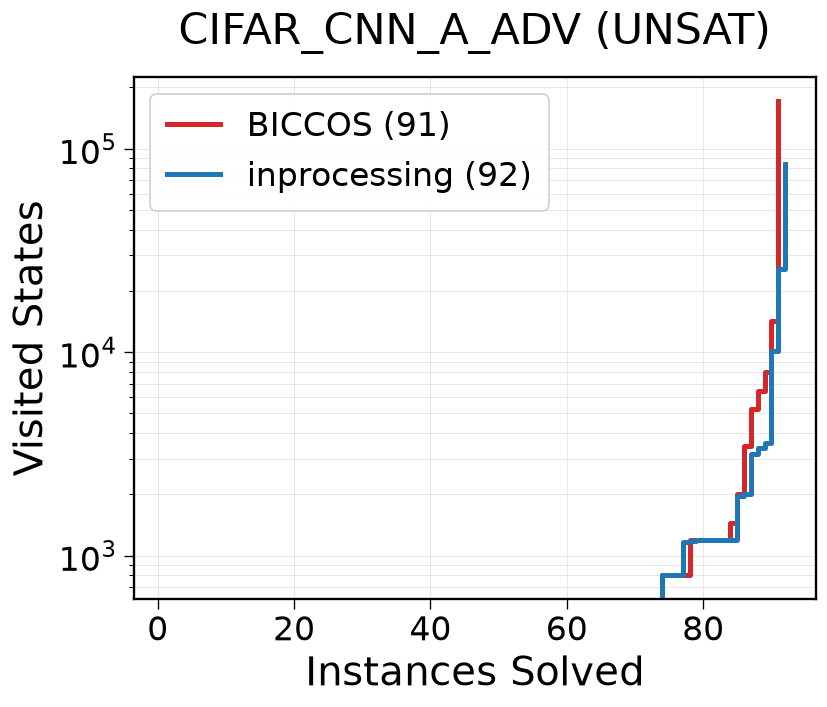}

\includegraphics[width=0.75\linewidth]{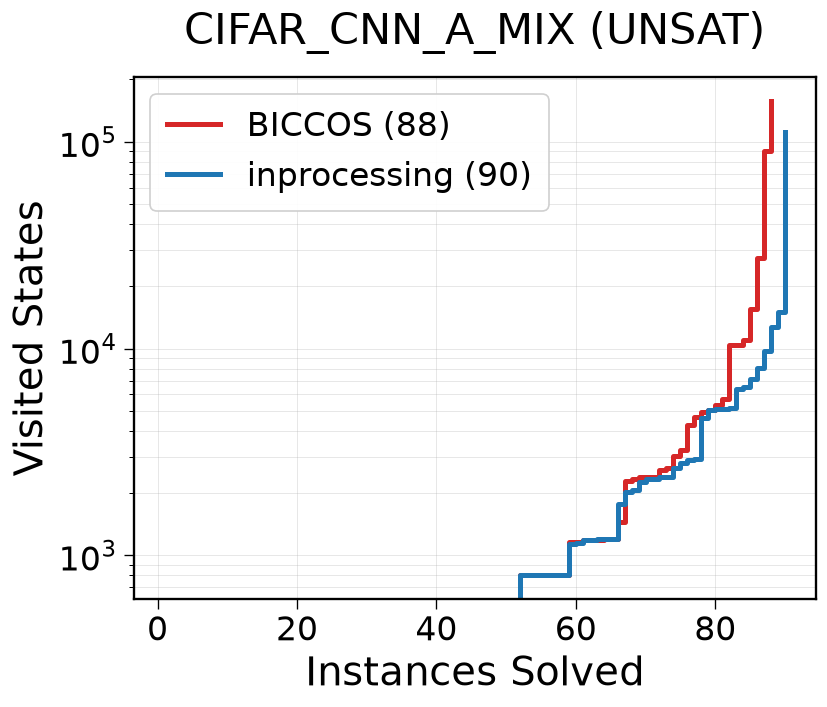}

\includegraphics[width=0.75\linewidth]{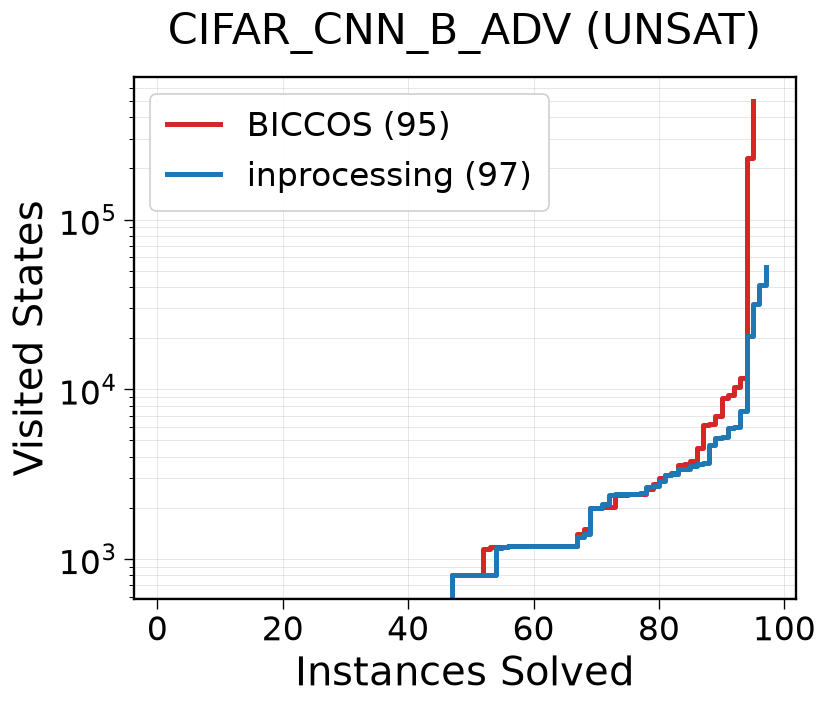}

\includegraphics[width=0.75\linewidth]{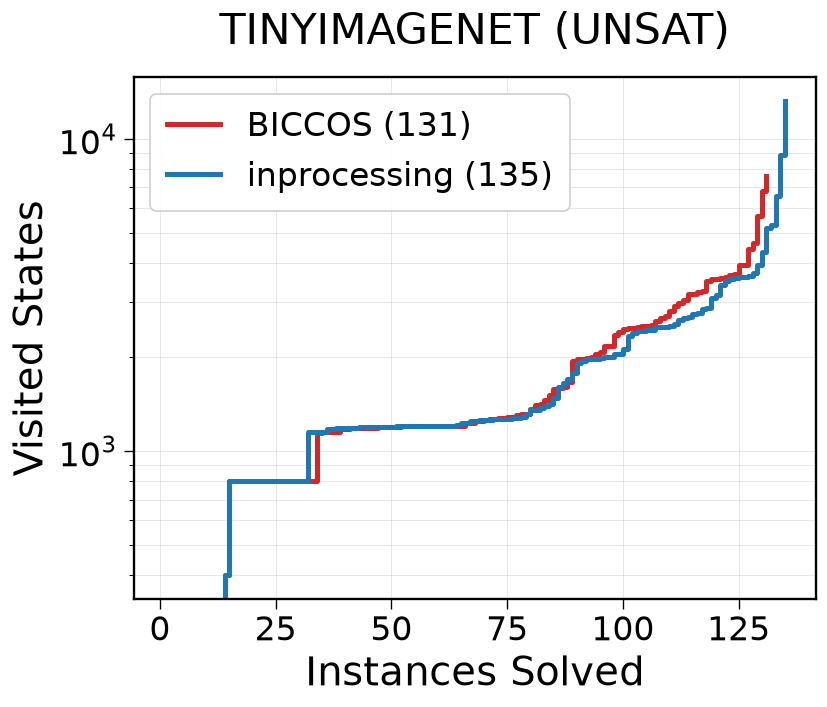}
\caption{Cactus plots of domains visited on the UNSAT instances of the \abcrown benchmarks.}
\label{fig:cactus-abcrown-visited}
\end{figure}

\end{document}